\documentclass[journal]{IEEEtran}
\usepackage{amsmath,amsfonts}
\usepackage{algorithmic}
\usepackage{array}
\usepackage[caption=false,font=normalsize,labelfont=sf,textfont=sf]{subfig}
\usepackage{textcomp}
\usepackage{stfloats}
\usepackage{url}
\usepackage{verbatim}
\usepackage{graphicx}
\usepackage{cite}
\usepackage{orcidlink}
\usepackage[ruled,vlined,linesnumbered]{algorithm2e}
	\SetKwInOut{Input}{input}
	\SetKwInOut{Output}{output}
\usepackage[utf8]{inputenc}
\usepackage[english]{babel}
\usepackage{booktabs}
\usepackage{csquotes}
\usepackage{makecell}
\usepackage{multirow}
\usepackage{threeparttable}
\usepackage{array}

\newcolumntype{C}[1]{>{\centering\arraybackslash}p{#1}}

\usepackage{caption}

\usepackage{amsthm}

\usepackage[svgnames,table]{xcolor}
\definecolor{darkblue}{rgb}{0, 0, 0.7}

\theoremstyle{plain}

\newtheorem{proposition}{Proposition}
\newtheorem{theorem}{Theorem}

\theoremstyle{definition}

\theoremstyle{remark}
\newtheorem{remark}{Remark}

\begin{document}

\title{General Performance Guarantee for Human Torque Estimation-Based Task-Agnostic Assistive Exoskeleton Control}

\author{
Duy Hoang \orcidlink{0009-0004-0030-1804},
Bastien Berret \orcidlink{0000-0002-0779-7724},
Olivier Bruneau \orcidlink{0000-0002-3485-2186}, and
Laurent Fribourg \orcidlink{0000-0002-5562-4078}
\thanks{This work has been submitted to the IEEE for possible publication. Copyright may be transferred without notice, after which this version may no longer be accessible.}
\thanks{This work is supported by a grant from the “Fondation CFM pour la Recherche", through the Jean-Pierre Aguilar fellowship.}
\thanks{Duy Hoang and Laurent Fribourg are with 
Universit\'e Paris-Saclay, CNRS, ENS Paris-Saclay,  LMF, 91190 Gif-sur-Yvette, France.
{(hoangduy@lmf.cnrs.fr; fribourg@lmf.cnrs.fr)}}
\thanks{Bastien Berret is with 
Universit\'e Paris-Saclay, Inria, CIAMS, 91190 Gif-sur-Yvette, France.
{(e-mail: bastien.berret@universite-paris-saclay.fr)}}
\thanks{Olivier Bruneau is with 
Universit\'e Paris-Saclay, ENS Paris-Saclay, LURPA, 91190 Gif-Sur-Yvette, France.
{(e-mail: olivier.bruneau@ens-paris-saclay.fr)}}
}

\markboth{Journal of \LaTeX\ Class Files,~Vol.~, No.~,}%
{Shell \MakeLowercase{\textit{et al.}}: A Sample Article Using IEEEtran.cls for IEEE Journals}


\maketitle

\begin{abstract}
Accurate human torque estimation is crucial for enabling task-agnostic control in robotic exoskeleton systems. However, estimation errors may cause mismatches between the robot assistance and the human intention, degrading controllability and task performance. In this paper, we address this issue by formally defining \textit{matched assistance} as scenarios in which the robot positively contributes to human movement. Based on this definition, we develop a theoretical framework to design the robot’s desired interaction torque that guarantees a lower bound on the matched assistance probability. Importantly, the proposed guarantee holds over the entire torque distribution, including unseen data beyond the training tasks. This provides our method with strong reliability and generalization, both of which are critical for effective exoskeleton control. The proposed strategy is implemented on the ABLE upper-limb exoskeleton and evaluated in a multi-task setup. Experimental results validate the theoretical guarantees and demonstrate that the proposed strategy achieves effective general performance across several tasks, guaranteeing movement smoothness while reducing human physical effort. 
\end{abstract}

\begin{IEEEkeywords}
Prosthetics and exoskeletons, human intention recognition, human–robot
interaction, task-agnostic control, electromyography (EMG).
\end{IEEEkeywords}

\section{Introduction}
Assistive exoskeletons have attracted increasing attention as a means of providing physical assistance during human movement, with the control strategy playing a critical role in ensuring effective assistance while preserving natural human behavior \cite{proietti2016upper, wang2025control, guatibonza2024assistive}. Conventional exoskeleton controllers, however, are often designed for specific tasks, with assistance strategies tailored to predefined task requirements. For example, trajectory-tracking exoskeletons \cite{amiri2025fuzzy, sarani2026finite} rely on a predefined reference movement, point-reaching exoskeletons \cite{Orhan2025using, wang2019kinematic} require a target position, and load-carrying exoskeletons require either prior knowledge of the load weight \cite{wang2019multi} or an estimation of it \cite{liu2025using, nasiri2022human}. While such task-specific approaches can provide effective assistance under their well-defined conditions, their reliance on predefined task information may limit their applicability when users perform movements beyond these prescribed scenarios. For instance, a controller optimized for reaching a specific target may provide less effective assistance when applied to reaching tasks with different targets \cite{jamvsek2021predictive}, or the assistance designed for load-carrying tasks \cite{treussart2020controlling} may result in less smooth and intuitive assistance when applied to pick-and-place movements \cite{Quesada2025less}. These limitations have motivated the development of \textit{task-agnostic} control strategies \cite{hoang2025emg, Molinaro2024TaskAgnostic}, which aim to provide effective and intuitive assistance without requiring prior knowledge of the task being performed.

Task-agnostic exoskeleton control refers to the capability of the robot to achieve the desired behavior across a wide range of human activities, rather than being limited to a finite set of tested tasks or trajectories \cite{hoang2025emg, Molinaro2024TaskAgnostic}. To design such task-agnostic controllers, recent approaches rely on accurate estimation of human torque throughout the movement to determine the robot action \cite{Molinaro2024TaskAgnostic}. While the method proposed in \cite{Molinaro2024TaskAgnostic} relies on a large dataset encompassing multiple human tasks to train the torque estimation model, \cite{hoang2025emg} introduced an alternative approach that requires data from only a single training task while still achieving good generalization across the entire torque distribution. Building on the ideas of \cite{hoang2025emg}, our objective is to design a task-agnostic exoskeleton controller trained on a limited task dataset while maintaining reliable performance on previously unseen tasks.

To ensure effective generalization, the authors of \cite{hoang2025emg} formulated an upper bound on the generalization error over the entire data distribution based on Rademacher complexity theory \cite{XavierCF25, BartlettM02} and trained the torque estimation model to minimize this bound. Although \cite{hoang2025emg} achieves promising performance on unseen tasks by deriving a tight generalization error upper bound, errors remain unavoidable and probably lead to discrepancies between robot assistance and human intention. These discrepancies can result in two common forms of mismatch: (i) \textit{reversed assistance}, in which the robot actually opposes the human's motion, and (ii) \textit{excessive assistance}, in which a large torque produced by the exoskeleton causes the user to counteract the robot to maintain stability. Such interaction mismatches are undesirable since they can degrade user comfort, reduce control efficiency, and increase human effort, as observed in \cite{Quesada2025less, kang2019effect}. In contrast, \textit{matched assistance} describes scenarios in which the robot assists the movement in the correct direction without exceeding the level of assistance required for the task, allowing control to be shared with the human without resisting or taking over their actions. Accordingly, increasing the rate of matched assistance provides a potential solution for achieving compliant and smooth human-robot interaction (HRI). 

Motivated by these considerations, the present paper aims to achieve effective general control performance by ensuring a high probability of matched assistance for the user. Given a human torque estimation model with a known generalization error upper bound \cite{hoang2025emg}, we employ a \textit{dead-zone mechanism} \cite{treussart2020controlling, Quesada2025less} to generate the robot's desired torque $\tau_r$ from the estimated human torque $\hat{\tau}_h$. A dead-zone threshold $T_{dz}$ is utilized to separate the estimated human intention into low- and high-torque scenarios. Under the dead-zone, $\tau_r$ is set to zero in low-torque scenarios to let the human be fully in charge of the movement and increases in high-torque scenarios to allow the robot's assistance. The dead-zone threshold $T_{dz}$ is determined based on the generalization error upper bound to derive a lower bound on the matched assistance probability, thereby guaranteeing the robot’s general performance. The desired interaction torque $\tau_r$ is finally tracked using a low-level controller. Our contributions are summarized as follows:

\begin{enumerate}
    \item While human torque estimation has been employed for task-agnostic exoskeleton control \cite{Quesada2025less, hoang2025emg, Molinaro2024TaskAgnostic}, recent studies show limited attention to how estimation errors impact HRI and potentially lead to undesirable assistance. To address this gap, we introduce the concept of \textit{matched assistance}, referring to assistance that positively contributes to human movement, and use it as a criterion to characterize and mitigate the drawback of torque estimation errors.

    \item To obtain a high matched assistance probability, we propose a \textit{dead-zone mechanism} \cite{treussart2020controlling, Quesada2025less}. However, unlike \cite{treussart2020controlling, Quesada2025less}, which combine the dead-zone with an integral module for load-carrying tasks, we show that effective and generalizable assistance can be achieved using the dead-zone mechanism alone with an appropriately selected threshold $T_{dz}$. We also develop a theoretical approach to determine $T_{dz}$ that guarantees a lower bound on the matched assistance probability. This conservative approach ensures smoothness across a broad range of tasks while allowing the robot to contribute to the movement.  

    \item We implement and evaluate the proposed strategy on the ABLE exoskeleton \cite{Quesada2025less, hoang2025emg} in a \textit{multi-task} setting. Four test tasks are used to assess control generalization. Experimental results show that the proposed method preserves movement smoothness while reducing human physical effort by an average of $8.9\%$ compared to the transparent mode, across a wide range of tasks.
\end{enumerate}

The rest of this paper is organized as follows. Section \ref{sec: 2} introduces the proposed control strategy, in which the formal definition of matched assistance, the dead-zone mechanism, and the theoretical approach for selecting the dead-zone threshold are detailed. In Section \ref{sec: 3}, we describe the experimental setup and evaluation protocol. Experimental results are summarized in Section \ref{sec: 4}, and Section \ref{sec: 5} concludes our paper.

\section{Exoskeleton Control with General Performance Guarantee}
\label{sec: 2}

\subsection{Exoskeleton Control Strategy}

Our control strategy is developed from \cite{hoang2025emg} with three control levels as illustrated in Fig. \ref{fig: control strategy}. We use electromyography (EMG) signals as the input $\boldsymbol{x}$ of the torque estimation model since they reflect muscle activation associated with human movement exertion \cite{furukawa2021collaborative, liu2024human} and can provide information about motor intention before movement onset \cite{Quesada2025less, hoang2025emg}, thereby enabling torque estimation slightly before the actual motion. From EMG signals, the high-level controller incorporates an estimation model $f_x(\cdot)$ to predict human torque. The output $\hat{\tau}_h$ of the EMG-to-torque model is then modulated by a middle-level controller to derive the desired HRI torque $\tau_r$ that ensures compliant and smooth interaction between human and robot. The desired torque for the robot $\tau_r$ can thus be considered as a function of EMG signals and denoted as $\tau_r(\boldsymbol{x})$. At the low-level controller, the control signal $\tau_e$ is synthesized to regulate the HRI torque $\tau_{i}$ such that it tracks the desired reference torque produced by the mid-level layer. For this purpose, we employ a proportional–integral (PI) controller combined with a gravity compensator $\tau_{gc}$, as in \cite{hoang2025emg}.         

\begin{figure}[htp]
\centering
\includegraphics[scale=0.38]{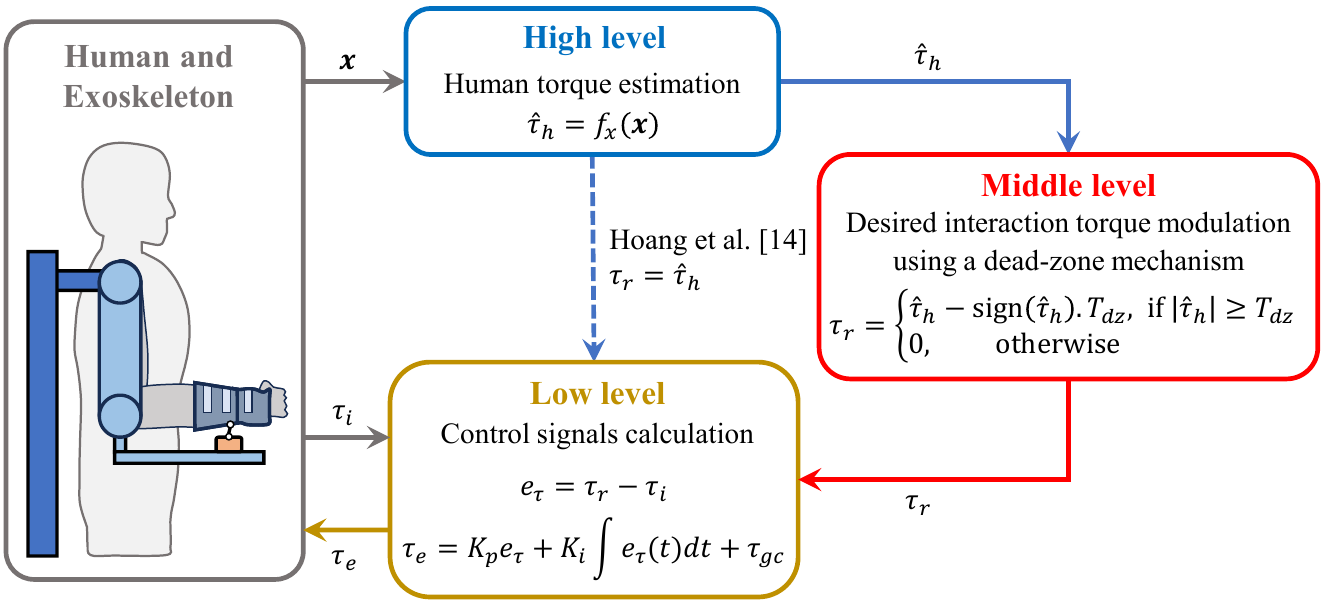}
\caption{Exoskeleton control strategy. Rather than directly passing the output of the high-level torque estimation model to the low-level controller as in~\cite{hoang2025emg}, we introduce a middle-level controller where the estimated torque is modulated by a dead-zone mechanism to ensure a high matched assistance probability before being applied to the low-level controller.}
\label{fig: control strategy}
\end{figure}

The main contribution of this work lies in the design of the middle-level controller. Given a high-level torque estimation model, our objective is to modulate its output into a safe desired interaction torque that can be delivered by the robot. To this end, Section \ref{subsec: matched assistance} introduces the concept of matched assistance, whereby the robot contributes positively to the user's intended movement. A higher probability of matched assistance is expected to promote smoother and more intuitive HRI. Building upon this concept, Section \ref{sec: modulating} presents a theoretical approach for designing the torque modulation strategy using a dead-zone mechanism to guarantee a high matched assistance probability over the entire data distribution. 

\subsection{Matched Assistance Probability}
\label{subsec: matched assistance}

To formally characterize the correlation between the real human torque~$\tau_h$ and the robot's desired torque $\tau_r(\boldsymbol{x})$ calculated from EMG signals $\boldsymbol{x}$, we consider a probability space $\left(\Omega, \mathcal{F}, P\right)$ in which $\Omega$ is the sample space containing all possible pairs of outcomes $(\boldsymbol{x},\tau_h)$, $\mathcal{F}$ is the event space of three assistance events: matched assistance~($M_a$), reversed assistance~($R_a$), and excessive assistance~($E_a$) defined as:
\begin{itemize}
    \item $M_a$: $\tau_r(\boldsymbol{x})\cdot\tau_h\geq0$ and $ |\tau_r(\boldsymbol{x})|\leq|\tau_h|$,
    \item $R_a$: $\tau_r(\boldsymbol{x})\cdot\tau_h<0$,
    \item $E_a$: $\tau_r(\boldsymbol{x})\cdot\tau_h\geq0$ and $ |\tau_r(\boldsymbol{x})|>|\tau_h|$,
\end{itemize}
and $P$ is the probability function. The \textit{matched assistance probability} $P(M_a)$ is defined as: 
\begin{equation}
\label{eq: def Pma}
    P(M_a) = P\left((\boldsymbol{x},\tau_h)\in M_a\right).
\end{equation}
In the context of general performance, the exoskeleton should interact effectively with the user while preserving controllability and natural movement execution across a broad range of tasks. Achieving such performance requires minimizing interaction mismatches that may interfere with the user's intended movement. Accordingly, we characterize general performance by a high matched assistance probability, which quantifies the likelihood that the robot's assistance is consistent with the human's intended movement. Having a guarantee on general performance is therefore equivalent to having a guarantee on the matched assistance probability $P(M_a)$.

We next show how the matched assistance probability can be used to interpret the HRI behavior produced by two task-agnostic controllers: the direct assistance~\cite{hoang2025emg} and the scaled assistance \cite{Molinaro2024TaskAgnostic}, as well as our proposed dead-zone-based assistance. The relation between the desired torque applied by each approach and the matched assistance probability is illustrated in Fig. \ref{fig: Assistance type}.

\begin{figure}[htp]
\centering
\includegraphics[scale=0.65]{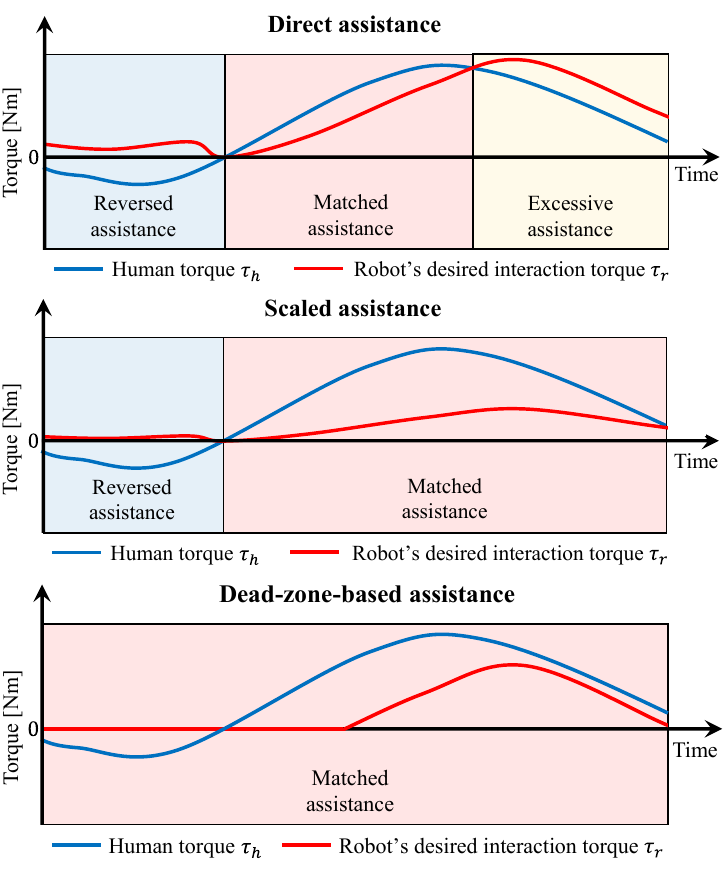}
\caption{Illustration of assistance behavior derived from the relationship between the human's intended and the robot's desired interaction torque under modulation approaches: direct assistance~\cite{hoang2025emg} (top plot), scaled assistance \cite{Molinaro2024TaskAgnostic} (middle plot), and the proposed dead-zone-based assistance (bottom plot).}
\label{fig: Assistance type}
\end{figure}

The direct assistance strategy proposed in \cite{hoang2025emg} directly uses the output of the torque estimation model as the desired HRI torque. Without any further modulation, its effectiveness therefore depends strongly on the accuracy of the high-level torque estimator. When estimation accuracy is insufficient, estimation errors can lead to reversed or excessive assistance (see the top plot of Fig. \ref{fig: Assistance type}), thereby degrading movement smoothness. To promote smoother HRI, Molinaro et al.~\cite{Molinaro2024TaskAgnostic} proposed a scaled-assistance strategy, in which the torque estimated by the high-level controller is scaled by a factor of $0.15$ to $0.2$ before being provided to the low-level controller. From the perspective of matched assistance, this strategy can be interpreted as reducing the magnitude of the torque applied under reversed mismatch. At the same time, scaling down the estimated torque reduces the likelihood of excessive assistance, thereby increasing the matched assistance probability (see the middle plot of Fig. \ref{fig: Assistance type}). However, this approach cannot remove reversed mismatches and limits the capacity of generating strong matched assistance. Another relevant condition is the transparent mode, in which $\tau_r=0$ theoretically, allowing the robot to follow the user without assistance. This condition corresponds to fully matched assistance and can preserve smooth and intuitive movement, as the user retains full control. Nevertheless, the lack of assistance requires greater human effort to perform a movement. Based on these observations, the objective of our proposed dead-zone-based assistance is to increase the matched assistance probability $P(M_a)$ while preserving a meaningful contribution of the robot during the task. With the dead-zone, we can reduce both reversed and excessive mismatches, as shown in the bottom plot of Fig. \ref{fig: Assistance type}. 

\subsection{Guarantee on Matched Assistance Probability}
\label{sec: modulating}

To increase the matched assistance probability $P(M_a)$ given by \eqref{eq: def Pma}, we design $\tau_r$ by modulating the estimated torque $\hat{\tau}_h$ from a high-level torque estimation model using a dead-zone. The dead-zone is defined with a positive threshold $T_{dz}$ as: 
\begin{equation}
\label{eq: dz}
    \tau_r = \begin{cases}
        \begin{aligned}
            &\hat{\tau}_h - \text{sign}(\hat{\tau}_h).T_{dz}, \quad\text{if $|\hat{\tau}_h|\geq T_{dz}$}\\
            &0, \quad \text{otherwise}
        \end{aligned}
    \end{cases}
\end{equation}

The main challenge of this approach is selecting the threshold $T_{dz}$, which should be sufficiently large to ensure a high matched assistance probability while remaining as small as possible to preserve assistance availability. In conventional studies \cite{treussart2020controlling}, \cite{Quesada2025less}, and \cite{Molinaro2024TaskAgnostic}, modulation parameters are tuned empirically by experts through experimental trials with the exoskeleton. This process requires conducting experiments and results in parameters whose performance is limited to the evaluated tasks, without guarantees for unexamined scenarios. In contrast, the proposed approach determines the dead-zone threshold $T_{dz}$ analytically without requiring task-specific experiments on the robot while ensuring general guarantees on control performance. Consider the data distribution $\mathcal{D}=\mathcal{X}\times\mathcal{T}$ of all possible EMG-torque pairs $(\boldsymbol{x},\tau_h)$ with $\boldsymbol{x}\in\mathcal{X}$ and $\tau_h\in\mathcal{T}$, we define the generalization error over~$\mathcal{D}$, denoted by $L_{\mathcal{D}}$ as:
\begin{equation}
\label{eq: Ld}
    L_{\mathcal{D}} = \mathbb{E}_{(\boldsymbol{x},{\tau}_h)\sim \mathcal{D}}\left(|{\tau}_h - f_x(\boldsymbol{x})|\right)
\end{equation} 
or briefly $L_{\mathcal{D}} =\mathbb{E}\left(|{\tau}_h - \hat{\tau}_h|\right)$. Our approach exploits the fact that $L_{\mathcal{D}}$
is upper bounded by $L_{\mathcal{B}^*}$ with high probability, as established in \cite{hoang2025emg}. The formulation of $L_{\mathcal{B}^*}$ is presented in Section \ref{subsec: training}. Accordingly, using $L_{\mathcal{B}^*}$, Section \ref{subsec: matching dz} delivers an analytical approach to design the dead-zone threshold $T_{dz}$ automatically.

\subsubsection{Guarantee on torque estimation generalization error}
\label{subsec: training}

Adapted from \cite{hoang2025emg, XavierCF25}, we present a framework to determine the upper bound $L_{\mathcal{B}^*}$ on the generalization error of the torque estimation model. The method applies to a class of estimation models with a linear output layer and consists of two main steps: (i) feature extraction and (ii) model retraining with bounded generalization error.

\paragraph{Feature extraction} 
Consider a torque estimation model $f_x(\cdot)$ with a linear output layer given by:
\begin{equation}
    f_x(\boldsymbol{x}) = \boldsymbol{a}^\top h(\boldsymbol{x}) =  \boldsymbol{a}^\top\boldsymbol{h}
\end{equation}
where $\boldsymbol{x}\in \mathbb{R}^d$ is the input vector, $h(\boldsymbol{x})$ represents the feature extractor from the input $\boldsymbol{x}$ that contains all layers before the last linear output layer of model $f_x(\cdot)$, the feature extractor produces a feature vector $\boldsymbol{h} \in \mathbb{R}^m$, which is finally converted into the human torque via a linear output layer with the weight $\boldsymbol{a} \in \mathbb{R}^m$. In this step, the model $f_x(\cdot)$ is trained using any conventional training strategy to obtain a feature extractor $h(\cdot)$ from the training dataset, without considering constraints on general performance.

\paragraph{Model retraining with bounded generalization error} We freeze the feature extractor derived from the previous step and retrain the last linear output layer to obtain the tightest upper bound on the generalization error. Since the feature extractor is fixed, the training of $f_x(\cdot)$ is equivalent to fitting a linear model $f_h(\boldsymbol{h}) = \boldsymbol{a}^\top\boldsymbol{h}$ with input $\boldsymbol{h}$ from the feature space $\mathcal{H}$ to the corresponding output torque $\tau_h$ from~$\mathcal{T}$. The training of $f_h(\boldsymbol{h})$ is performed to minimize the generalization error $L_{\mathcal{D}}$ defined by (\ref{eq: Ld}). Since~$\mathcal {D}$ is not accessible, the estimation model is trained on a training subset $\mathcal{S} = \mathcal{X}_{\mathcal{S}} \times \mathcal{T}_{\mathcal{S}}$ of $n$ samples drawn independently and identically distributed from $\mathcal{D}$ with $\mathcal{X}_{\mathcal{S}} \subset \mathcal{X}$ and $\mathcal{T}_{\mathcal{S}} \subset \mathcal{T}$.

Similar to \cite{hoang2025emg, XavierCF25}, we use gradient descent (GD) to train the linear model $f_h(\cdot)$ to minimize the quadratic loss function ${\cal L}(\boldsymbol{W})=\frac{1}{2}\sum_{i=1}^n|v_i|^2$ in which the training error $v_{i}(k) = \boldsymbol{a}(k)^\top h(\boldsymbol{x}_i) - \tau_{h, i}$ represents the estimation error of the $i^{th}$ sample. At each step $k$ of GD, with a probability at least $1-\delta$, the upper bound $L_{\mathcal{B}}(k)$ on $L_{\mathcal{D}}$ is formulated as:
\begin{equation}
\label{eq: LB}
    L_{\mathcal{B}}(k) = \frac{1}{n}\sum_{i=1}^n|v_{i}(k)|+ \frac{2\mathcal{C}}{\sqrt{n}}\|\boldsymbol{a}(k)\| + 3R\sqrt{\frac{\log \frac{2}{\delta}}{2n}}
\end{equation}
where $\boldsymbol{a}(k)$ is the output weight at $k^{th}$ GD step, the parameters $\mathcal{C} = \max_{i\in[n],\;\boldsymbol{x}_i \in \mathcal{X}_S}\|h(\boldsymbol{x_i})\|$ and $R\geq |f_h(h(\boldsymbol{x})) - \tau_h|$ $\forall \;(\boldsymbol{x},\tau_h)\sim {\mathcal{D}}$. Following the early stopping strategy \cite{XavierCF25}, the training process is subsequently halted at the step $k^*$ when $L_{\mathcal{B}}$ starts increasing to obtain a tight upper bound $L_{\mathcal{B}^*} =L_{\mathcal{B}}(k^*)$. From \cite{ma22}, we have:
\begin{proposition}\label{theo: Rademacher}
With a probability at least $1-\delta$ over the sample~$\mathcal{S}$ of size $n$, 
the generalization error $L_{\mathcal{D}}$ satisfies: 
\begin{equation}\label{eq: lb*}
L_{{\mathcal{D}}}(f_h) 
\leq  L_{\mathcal{B}^*}. 
\end{equation}

\end{proposition}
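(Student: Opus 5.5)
The plan is to derive \eqref{eq: lb*} from the standard Rademacher-complexity generalization bound for bounded losses, applied to the class of linear predictors on the frozen feature space $\mathcal{H}$. Since $h(\cdot)$ is fixed after the feature-extraction step, the only trainable object is $f_h(\boldsymbol{h})=\boldsymbol{a}^\top\boldsymbol{h}$. I will therefore work with the norm-bounded class $\mathcal{F}_B=\{\boldsymbol{h}\mapsto \boldsymbol{a}^\top\boldsymbol{h} : \|\boldsymbol{a}\|\le B\}$ and the loss class $\ell\circ\mathcal{F}_B=\{(\boldsymbol{x},\tau_h)\mapsto|\tau_h-\boldsymbol{a}^\top h(\boldsymbol{x})|\}$. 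By the definition of $R$, every loss in this class takes values in $[0,R]$.

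First I invoke the empirical-Rademacher form of the uniform deviation bound from \cite{ma22}, using McDiarmid's inequality twice (once for the expectation, once for the empirical Rademacher complexity). It states that with probability at least $1-\delta$, for all $f\in\mathcal{F}_B$,
\begin{equation*}
L_{\mathcal{D}}(f)\le \frac{1}{n}\sum_{i=1}^n|\tau_{h,i}-f(h(\boldsymbol{x}_i))| + 2\hat{\mathfrak{R}}_{\mathcal{S}}(\ell\circ\mathcal{F}_B) + 3R\sqrt{\frac{\log\frac{2}{\delta}}{2n}} .
\end{equation*}
This produces the first and third terms of \eqref{eq: LB}.

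Second, I bound the complexity term. The map $u\mapsto|\tau_h-u|$ is $1$-Lipschitz, so Talagrand's contraction lemma gives $\hat{\mathfrak{R}}_{\mathcal{S}}(\ell\circ\mathcal{F}_B)\le\hat{\mathfrak{R}}_{\mathcal{S}}(\mathcal{F}_B)$. For the linear class, Cauchy--Schwarz followed by Jensen's inequality gives
\begin{equation*}
\hat{\mathfrak{R}}_{\mathcal{S}}(\mathcal{F}_B)\le \frac{B}{n}\sqrt{\sum_i\|h(\boldsymbol{x}_i)\|^2}\le \frac{B\,\mathcal{C}}{\sqrt{n}},
\end{equation*}
where $\mathcal{C}=\max_i\|h(\boldsymbol{x}_i)\|$. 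This yields the middle term $\frac{2\mathcal{C}}{\sqrt{n}}B$. Setting $B=\|\boldsymbol{a}(k^*)\|$ and $f_h=\boldsymbol{a}(k^*)^\top\cdot$ then gives exactly $L_{\mathcal{D}}(f_h)\le L_{\mathcal{B}}(k^*)=L_{\mathcal{B}^*}$.

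The main obstacle is that $B=\|\boldsymbol{a}(k^*)\|$ and $k^*$ are themselves chosen from the sample $\mathcal{S}$. The bound above, however, holds with probability $1-\delta$ only for a norm radius $B$ fixed in advance. I would handle this in one of two ways:
\begin{itemize}
\item Fix a deterministic radius $B$ before seeing the data, for instance one dictated by the GD step budget or by a prescribed constraint set. Any iterate with $\|\boldsymbol{a}(k)\|\le B$ is then covered, and $B$ is used in the middle term. This is the setting I would expect \cite{ma22} and \cite{XavierCF25} to intend.
\item Apply a peeling/union bound over a geometric grid of radii $B_j=2^j$ with confidence $\delta_j=\delta/2^{j+1}$. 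This yields the stated bound up to a factor $2$ in the norm and a small additive $\log\log$ term.
\end{itemize}
I would present the first route as the rigorous statement underlying Proposition~\ref{theo: Rademacher}, and remark that the data-dependent choice of $k^*$ is justified by the second route at a negligible cost. The remaining checks are routine: $R$ must bound the loss uniformly over $\mathcal{F}_B$ and not only at $f_h$, and the samples in $\mathcal{S}$ must be i.i.d.
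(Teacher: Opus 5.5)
Your core argument is the paper's own (Appendix A): a uniform empirical-Rademacher bound with the $3R\sqrt{\log(2/\delta)/(2n)}$ term, then the linear-class estimate $\mathcal{R}_S(\mathcal{F}_{k^*})\le\|\boldsymbol{a}(k^*)\|\,\mathcal{C}/\sqrt{n}$ (Theorem 5.5 of \cite{ma22}), then substitution. The paper takes the deviation bound ready-made from Proposition 1 of \cite{XavierCF25} with $p=1$, which is already stated for the hypothesis class. You unpack it into McDiarmid plus contraction, which is only a difference in presentation.

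The obstacle you identify is genuine, and the paper does not address it. Appendix A simply defines $\mathcal{F}_{k^*}$ with the sample-dependent radius $\|\boldsymbol{a}(k^*)\|$ and applies the fixed-class bound. Its definition of $R$ also only controls the loss of $f_h$ itself. Setting $B=\|\boldsymbol{a}(k^*)\|$ directly therefore reproduces the paper's proof exactly, heuristic step included.

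Neither of your repairs, however, proves the proposition as stated.
\begin{itemize}
\item Route 1 gives the bound with an a priori $B\ge\|\boldsymbol{a}(k^*)\|$ in the middle term, which is in general larger than $L_{\mathcal{B}^*}$. Presenting it as ``the rigorous statement underlying'' the proposition really means replacing the proposition by a weaker one.
\item Route 2 doubles the radius and requires $R$ to bound the loss on every shell you use. It also adds a confidence cost. With your weights $\delta_j=\delta/2^{j+1}$, that cost is $\log(2/\delta_j)=\log(2/\delta)+(j+1)\log 2$, i.e.\ of order $\log\|\boldsymbol{a}(k^*)\|$ under the square root, not $\log\log$. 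A $\log\log$ penalty needs weights such as $\delta_j=\delta/((j+1)(j+2))$.
\end{itemize}
If you want to be strict about data dependence, the same objection applies to the frozen extractor $h$. Both you and the paper treat it as fixed, which is legitimate only if it was trained on data independent of $\mathcal{S}$ (or if one conditions on it). No peeling over radii repairs that.
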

\begin{proof}
    See Appendix A.
\end{proof}

\subsubsection{Analytical approach for designing the dead-zone}
\label{subsec: matching dz}
With the generalization error upper bound $L_{\mathcal{B}^*}$ obtained from Section \ref{subsec: training}, we now present our theoretical approach to set the threshold $T_{dz}$. We first consider the relation between the matched assistance probability $P(M_a)$ defined by \eqref{eq: def Pma} and the dead-zone threshold $T_{dz}$, given by the following proposition: 

\begin{proposition}
\label{prop: relation PMa}
For a given dead-zone threshold~$T_{dz}$, the matched assistance probability $P(M_a)$ satisfies:
\begin{equation}
\label{eq: app 1}
    P(M_a)\geq P\left(|\hat{\tau}_h-\tau_h|\leq T_{dz}\right).
\end{equation}
\end{proposition}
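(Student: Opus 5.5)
We will prove the set inclusion $\{|\hat{\tau}_h-\tau_h|\leq T_{dz}\}\subseteq M_a$. Monotonicity of the probability measure $P$ then gives \eqref{eq: app 1} immediately. So fix an outcome $(\boldsymbol{x},\tau_h)$ with $|\hat{\tau}_h-\tau_h|\leq T_{dz}$, where $\hat{\tau}_h=f_x(\boldsymbol{x})$. We must check the two conditions defining $M_a$: $\tau_r\cdot\tau_h\geq 0$ and $|\tau_r|\leq|\tau_h|$. We split according to the two branches of the dead-zone \eqref{eq: dz}.

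\emph{Case 1: $|\hat{\tau}_h|<T_{dz}$.} Here $\tau_r=0$. Then $\tau_r\tau_h=0\geq0$ and $|\tau_r|=0\leq|\tau_h|$, so the outcome lies in $M_a$ trivially. This case does not even use the error assumption.

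\emph{Case 2: $|\hat{\tau}_h|\geq T_{dz}$.} By symmetry ($\tau_r$ is odd in $\hat{\tau}_h$ and the events are invariant under $(\hat{\tau}_h,\tau_h)\mapsto(-\hat{\tau}_h,-\tau_h)$), we may assume $\hat{\tau}_h\geq T_{dz}>0$. The dead-zone then gives $\tau_r=\hat{\tau}_h-T_{dz}\geq 0$. The error assumption yields $\tau_h\geq \hat{\tau}_h-T_{dz}=\tau_r\geq0$. Hence $\tau_h\geq0$, so $\tau_r\tau_h\geq0$. Moreover $|\tau_r|=\tau_r\leq\tau_h=|\tau_h|$. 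Both conditions of $M_a$ hold.

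Combining the cases gives the inclusion, and applying $P$ yields $P(M_a)\geq P(|\hat{\tau}_h-\tau_h|\leq T_{dz})$. The only step needing care is Case 2. There the key observation is that the one-sided inequality $\tau_h\geq\hat{\tau}_h-T_{dz}$ simultaneously gives the sign agreement and the magnitude bound, because the dead-zone output equals exactly that lower bound. I would also check the boundary $\hat{\tau}_h=T_{dz}$, where $\tau_r=0$; it is covered by either case. I would note as well that measurability of the events is implicit in the paper's probability-space setup.
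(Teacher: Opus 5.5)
Your proof is correct. It rests on the same containment (error band inside the matched region) that the paper's argument uses implicitly, but it is organized differently.

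The paper splits on the sign of $\tau_h$ rather than on the dead-zone branch. For $\tau_h<0$ it first identifies the matched event exactly as $\tau_h-T_{dz}\le\hat{\tau}_h\le T_{dz}$, and symmetrically as $-T_{dz}\le\hat{\tau}_h\le\tau_h+T_{dz}$ for $\tau_h\ge0$. It then compares this with the error band $\tau_h-T_{dz}\le\hat{\tau}_h\le\tau_h+T_{dz}$ through a chain of CDF-type differences, and finally recombines the two cases.

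Your outcome-wise inclusion $\{|\hat{\tau}_h-\tau_h|\le T_{dz}\}\subseteq M_a$, followed by monotonicity of $P$, has several advantages:
\begin{itemize}
\item it is shorter and needs no conditioning or recombination;
\item it holds for every probability measure, in particular for the empirical frequencies in \eqref{eq: calculate Pma};
\item it sidesteps a slip in the paper's displayed chain, where $P(a\le X\le b)$ is written as $P(X\le b)-P(X\ge a)$ rather than $P(X\le b)-P(X<a)$. The same slip recurs in the last line; with the complement corrected, the paper's chain goes through.
\end{itemize}

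In exchange, the paper's case split describes $M_a$ exactly on each sign of $\tau_h$, which makes the slack in the inequality visible. The matched region exceeds the error band by an interval of length $|\tau_h|$. Equivalently, errors $\hat{\tau}_h-\tau_h$ of sign opposite to $\tau_h$ are tolerated up to magnitude $T_{dz}+|\tau_h|$ rather than $T_{dz}$. This is one reason the guaranteed bound is conservative in practice.
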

\begin{proof}
We divide the human torque $\tau_h$ from the entire data distribution into 2 cases: $\tau_h <0$ and $\tau_h \geq 0$. Considering the case $\tau_h <0$, the dead-zone provides a matched $\tau_h \leq \tau_r \leq 0$ if $\tau_h - T_{dz}\leq \hat{\tau}_h \leq T_{dz}$. Suppose that $\tau_h <0$ is already held, the matched probability in this case satisfies:
\begin{equation}
\label{eq: Pma bound sigma}
\begin{aligned}
    P(M_a) =&\; P(\tau_h - T_{dz}\leq \hat{\tau}_h \leq T_{dz}) \\
    =&\; P(\hat{\tau}_h \leq T_{dz}) - P(\tau_h - T_{dz}\leq \hat{\tau}_h)\\
    =&\; P(\hat{\tau}_h - \tau_h\leq T_{dz} - \tau_h) - P(\tau_h - \hat{\tau}_h\leq T_{dz})\\
    \geq&\; P(\hat{\tau}_h - \tau_h\leq T_{dz}) - P(\hat{\tau}_h - \tau_h \geq -T_{dz}) \\
    =&\;P\left(|\hat{\tau}_h-\tau_h|\leq T_{dz}\right).
\end{aligned}
\end{equation}
By applying a similar process to the case   $\tau_h\geq0$, we also have $P(M_a)\geq P\left(|\hat{\tau}_h-\tau_h|\leq T_{dz}\right)$. Since these two cases cover the entire torque distribution, we derive (\ref{eq: app 1}).
\end{proof}

Proposition \ref{prop: relation PMa} indicates that the matched probability $P(M_a)$ does not only depend on $T_{dz}$ but also on the distribution of the estimation error $(\tau_h - \hat{\tau}_h)$. While the exact distribution of the estimation error is generally unknown, empirical observations in Fig. \ref{fig: Error distribution} suggest that, for a high-quality torque estimation model, the error can be reasonably characterized by a zero-mean normal distribution. Accordingly, we assume:
\begin{equation}
\label{eq: assum}
        \left( {\tau}_h - \hat{\tau}_h\right)\sim\mathcal{N}\left(0,\sigma^2\right).
\end{equation}
From this assumption, the selection of $T_{dz}$ is undertaken based on the following theorem:

\begin{theorem}\label{theo: dz} If the estimation error of the estimation model follows a normal distribution with zero mean $\mathcal{N}(0,\sigma^2)$ and generalization error $L_{\mathcal{D}} = \mathbb{E}\left(|{\tau}_h - \hat{\tau}_h|\right)$ satisfies $L_{\mathcal{D}}\leq L_{\mathcal{B}^*}$, then, by using the dead-zone mechanism with the threshold $T_{dz}$, the matched assistance probability $P(M_a)$ is guaranteed to satisfy:
\begin{equation}
\label{eq: pb}
P\left(M_a\right) 
\geq  2\Phi\left(\sqrt{\frac{2}{\pi}}\frac{T_{dz}}{L_{\mathcal{B}^*}}\right) - 1
\end{equation}
where $\Phi\left(\cdot\right)$ is the standard normal cumulative distribution function (CDF).
\end{theorem}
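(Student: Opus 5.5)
The plan is to combine Proposition~\ref{prop: relation PMa} with the Gaussian assumption \eqref{eq: assum}. We then convert the unknown standard deviation $\sigma$ into the known bound $L_{\mathcal{B}^*}$ through the mean absolute deviation of a normal variable.

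First, by Proposition~\ref{prop: relation PMa}, it suffices to lower bound $P(|\hat{\tau}_h-\tau_h|\leq T_{dz})$. Under \eqref{eq: assum}, write $e=\tau_h-\hat{\tau}_h=\sigma Z$ with $Z\sim\mathcal{N}(0,1)$. By symmetry of the standard normal,
\begin{equation*}
P(|e|\leq T_{dz}) = P\left(|Z|\leq \tfrac{T_{dz}}{\sigma}\right) = 2\Phi\left(\tfrac{T_{dz}}{\sigma}\right)-1 .
\end{equation*}

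Second, we relate $\sigma$ to $L_{\mathcal{D}}$. For $e\sim\mathcal{N}(0,\sigma^2)$ a direct integration gives
\begin{equation*}
\mathbb{E}|e| = 2\int_0^\infty \frac{t}{\sqrt{2\pi}\sigma}e^{-t^2/(2\sigma^2)}\,dt = \sigma\sqrt{\tfrac{2}{\pi}} ,
\end{equation*}
so $L_{\mathcal{D}}=\sigma\sqrt{2/\pi}$, i.e.\ $\sigma=\sqrt{\pi/2}\,L_{\mathcal{D}}$. The hypothesis $L_{\mathcal{D}}\leq L_{\mathcal{B}^*}$ (which holds with probability at least $1-\delta$ by Proposition~\ref{theo: Rademacher}) then yields $\sigma\leq \sqrt{\pi/2}\,L_{\mathcal{B}^*}$. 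Equivalently, $T_{dz}/\sigma \geq \sqrt{2/\pi}\,T_{dz}/L_{\mathcal{B}^*}$.

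Finally, since $\Phi$ is nondecreasing and $T_{dz}>0$, the map $s\mapsto 2\Phi(T_{dz}/s)-1$ is nonincreasing in $s>0$. Replacing $\sigma$ by its upper bound therefore gives
\begin{equation*}
P(M_a)\geq 2\Phi\left(\tfrac{T_{dz}}{\sigma}\right)-1\geq 2\Phi\left(\sqrt{\tfrac{2}{\pi}}\tfrac{T_{dz}}{L_{\mathcal{B}^*}}\right)-1 ,
\end{equation*}
which is \eqref{eq: pb}.

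The only nonroutine ingredient is the half-normal mean identity $\mathbb{E}|e|=\sigma\sqrt{2/\pi}$. It follows from the substitution $u=t^2/(2\sigma^2)$, and I expect no real obstacle there. One should also remark that the guarantee inherits the $1-\delta$ confidence of Proposition~\ref{theo: Rademacher}, as well as the modeling assumption \eqref{eq: assum}.
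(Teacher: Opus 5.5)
Your proof is correct and follows essentially the same route as the paper: reduce via Proposition~\ref{prop: relation PMa} to $P(|\hat{\tau}_h-\tau_h|\leq T_{dz}) = 2\Phi(T_{dz}/\sigma)-1$, use the half-normal mean $L_{\mathcal{D}}=\sqrt{2/\pi}\,\sigma$ to get $\sigma\leq\sqrt{\pi/2}\,L_{\mathcal{B}^*}$, and conclude by monotonicity of $\Phi$. Your write-up is slightly cleaner than the paper's. You keep the inequality from Proposition~\ref{prop: relation PMa}, where the paper writes an equality. You also compute the two-sided probability directly by symmetry, whereas the paper's intermediate difference $P(\tau_h-\hat{\tau}_h\leq T_{dz})-P(\tau_h-\hat{\tau}_h\geq -T_{dz})$ has a second term that should read $P(\tau_h-\hat{\tau}_h< -T_{dz})$.
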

\begin{proof}
From the distribution $ \left( {\tau}_h - \hat{\tau}_h\right)\sim\mathcal{N}\left(0,\sigma^2\right)$, we have:
\begin{equation}
    L_{\mathcal{D}} = \mathbb{E}\left(|{\tau}_h - \hat{\tau}_h|\right) = \sqrt{\frac{2}{\pi}}\sigma.
\end{equation}
Accordingly, since $L_{\mathcal{D}}\leq L_{\mathcal{B}^*}$, we derive:
\begin{equation}
\label{eq: sigma}
    \sigma = \sqrt{\frac{\pi}{2}}L_{\mathcal{D}}\leq\sqrt{\frac{\pi}{2}}L_{\mathcal{B}^*}.
\end{equation}
Next, from Proposition \ref{prop: relation PMa} we have: 
\begin{equation}
\label{eq: PMa by sigma}
\begin{aligned}
    P(M_a) =&\;P\left(|\hat{\tau}_h-\tau_h|\leq T_{dz}\right)\\
    =&\;P\left(\tau_h - \hat{\tau}_h \leq T_{dz}\right) - P\left(\tau_h - \hat{\tau}_h \geq -T_{dz}\right)\\
    =&\;\Phi\left(\frac{T_{dz}}{\sigma}\right) - \Phi\left(\frac{-T_{dz}}{\sigma}\right)=\;2\Phi\left(\frac{T_{dz}}{\sigma}\right) - 1.
\end{aligned}
\end{equation}
Substituting \eqref{eq: sigma} into \eqref{eq: PMa by sigma}, note that the CDF function $\Phi(\cdot)$ is non-decreasing, we obtain \eqref{eq: pb}.
\end{proof}

\begin{remark}
\label{re: set dz}
    From Theorem \ref{theo: dz}, by formulating an upper bound $L_{\mathcal{B}^*}$ on the generalization error, we can guarantee a lower bound on the matched assistance probability $P(M_a)$ for any given dead-zone threshold $T_{dz}$, and \textit{this is obtained without explicitly knowing} $\sigma$. Theorem \ref{theo: dz} thus provides a practical guideline for selecting the dead-zone threshold $T_{dz}$. 
       
    In practice, the dead-zone threshold introduces a trade-off between movement smoothness and the level of robotic assistance. For intuitive and safe HRI, we prioritize movement smoothness over maximizing assistance. The threshold $T_{dz}$ should be designed as small as possible to avoid a lack of assistance but large enough to suppress mismatched interaction and thereby improve human controllability and movement smoothness. Accordingly, we suggest setting $T_{dz}=2L_{\mathcal{B}^*}$, which provides a guaranteed matched assistance probability of $P(M_a)$ of at least:
\begin{equation}
\label{eq: Pb*}
    P_{\mathcal{B}^*} = 2\Phi\left(2\sqrt{\frac{2}{\pi}}\right) - 1 = 0.89.
\end{equation}
\end{remark}

\begin{remark}
Although we assume that the estimation error follows a zero-mean normal distribution $\mathcal{N}(0,\sigma^2)$, which is in accordance with our empirical observations in Fig. \ref{fig: Error distribution}, a less restrictive form of Theorem \ref{theo: dz} can be derived without imposing the zero-mean assumption. This is illustrated in Appendix B.
\end{remark}

\section{Experimental Method}
\label{sec: 3}
We consider two experiments to validate: (i) the guarantee of the proposed method on the matched assistance probability $P(M_a)$, and (ii) the performance of the control strategy in practical exoskeleton control under a task-agnostic setup. 

\subsection{Experiment on General Performance Guarantee} This experiment aims to examine the impact of using the dead-zone in increasing matched assistance probability and validate the theoretical choice of the dead-zone threshold to guarantee a lower bound on the matched assistance probability given by Theorem \ref{theo: dz} and Remark \ref{re: set dz}. 

\subsubsection{Participant and task} We conducted this experiment using the 17-subject multi-joint dataset provided in \cite{quesada_dataset}. The data collection process and the calculation of the human torque label $\tau_h$ are described in \cite{Quesada2024} and summarized in Section~\ref{subsec: data collection}. Participants were connected to the exoskeleton at the wrist level via an orthosis mounted at the end of the forearm of the exoskeleton, as in \cite{verdel2022influence}. Each participant performed 10 trials of random trajectory tracking while wearing an upper-limb exoskeleton. All movements were executed in the parasagittal plane and involved only shoulder and elbow flexion/extension. 

\subsubsection{Torque estimation models} We applied the bounded-generalization-error framework (Section \ref{subsec: training}) to five EMG-to-torque architectures, including: the multivariate linear regression (MVLR) model \cite{camardella2021}, which directly maps filtered EMG signals to human torque without feature extraction; the nonlinear mapping (NLMap) model \cite{Quesada2025less}, which uses a nonlinear activation for feature extraction; the feedforward neural network \cite{zhang2020ankle} with four hidden layers (4HNN) model, which utilizes four hidden layers as the feature extractor; the LSTM model \cite{zhang2022lower}, which employs a sequence of LSTM block followed by fully connected layers as the feature extractor; and the CNN-LSTM model \cite{hoang2025from}, which uses a hybrid CNN-LSTM architecture to extract spatio-temporal information from filtered EMG signals followed by fully connected layers as the feature extractor.

\subsubsection{Evaluation of generalization guarantee} For each subject, the 10 trials were split into a training set of 3 trials and a test set of 7 trials. A larger test set was intentionally adopted since the generalization error cannot be directly accessed; therefore, a large test set enables estimation of these quantities, as suggested in \cite{XavierCF25, zhang2017understanding}. Consider the test set of $n_{test}$ EMG-torque samples $\{(\boldsymbol{x}_j, \tau_{h,j})\}_{j=1}^{n_{test}}$, for the $j^{th}$ test sample, we denote the corresponding estimation given by the torque estimation model as $\hat{\tau}_{h, j}$ and the corresponding robot torque obtained by \eqref{eq: dz} as ${\tau}_{r, j}$. The generalization error is estimated by the test error $L_{test}$ defined as the mean absolute error:
\begin{equation}
    L_{test} = \frac{1}{n_{test}}\sum_{j=1}^{n_{test}}|\tau_{h,j} - \hat{\tau}_{h,j}|
\end{equation}
We first verify if $L_{test} \leq L_{\mathcal{B}^*}$ as indicated by Proposition~\ref{theo: Rademacher}. Then, using the generalization error upper bound $L_{\mathcal{B}^*}$, we set the dead-zone $T_{dz} = 2L_{\mathcal{B}^*}$ and calculate the matched assistance probability $P(M_a)$ of each model as:
\begin{equation}
\label{eq: calculate Pma}
    P(M_a) = \frac{n_{M_a}}{n_{test}},
\end{equation}
in which $n_{M_a}$ is the number of matched assistance time sampled by comparing $\tau_{r,j}$ to the corresponding $\tau_{h,j}$ in the test set. From \eqref{eq: calculate Pma} and \eqref{eq: Pb*}, we further verify if $P(M_a) \geq 0.89$, as indicated by  Remark \ref{re: set dz} and Theorem \ref{theo: dz}.   

\subsubsection{Matched assistance probability versus assistance} 
To quantify the contribution of the desired HRI torque $\tau_r(\boldsymbol{x})$ relative to the human torque $\tau_h$, we introduce the \textit{positive contribution index} (PCI). Inspired by the assistance index~\cite{fanti2022assessment}, the PCI is adapted to the proposed concept of matched assistance. For each pair of $\tau_r(\boldsymbol{x})$ and $\tau_h$, the PCI takes a positive value when the robot assistance is matched with the human's intended movement and a negative value when the assistance is mismatched. The PCI is defined as follows:

\begin{equation}
\label{eq: PCI}
    \text{PCI} = \begin{cases}
        \begin{aligned}
            &|\tau_r(\boldsymbol{x})|, \quad\text{if $(\boldsymbol{x},\tau_{h})\in M_a$}\\
            &- |\tau_r(\boldsymbol{x})|, \quad\text{if $(\boldsymbol{x},\tau_{h})\in R_a$}\\
            & - |\tau_r(\boldsymbol{x}) - \tau_h|\quad\text{if $(\boldsymbol{x},\tau_{h})\in E_a$}
        \end{aligned}
    \end{cases}
\end{equation}

Since mismatched assistance can adversely affect human controllability, $P(M_a)$ is treated as the primary performance criterion for the controller design. On the other hand, PCI provides complementary information regarding the magnitude of the robot's contribution but does not replace the requirement for reliable matched assistance. Thus, $P(M_a)$ is prioritized, and PCI is considered only when the corresponding $P(M_a)$ values are comparable. Among such configurations, the one yielding a higher PCI is preferred.

\subsection{Experiment on EMG Assistance}
This experiment aims to validate the generalizability and effectiveness of the proposed control strategy in a multi-task setup. The controller training and design were based on a single standard trajectory tracking task, whereas four additional tasks were used for testing.

\subsubsection {Participant and task} The assistance experiment was conducted with 5 healthy subjects (4 males, average height of 1.72m, average weight of 70kg) performing a series of motor tasks in the parasagittal plane involving shoulder and elbow flexion/extension. Participants had no prior experience with the robot. Written informed consent was provided, and the protocol was approved by the ethics committee of Université Paris-Saclay. Each participant completed two sessions.

In the first session, data were collected to train the high-level torque estimation model. Following \cite{hoang2025emg, Quesada2024}, we consider the random trajectory tracking task as the training task. The participant tracked a randomly generated trajectory projected on a screen. Each trajectory lasted 30 seconds, and~3 trajectories were performed in this step. During the training task, the robot was set to viscous mode \cite{Quesada2024} to increase the effort required from the participant. This allows capturing a large range of muscle activation. 

\begin{figure}[htp]
\centering
\includegraphics[scale=0.38]{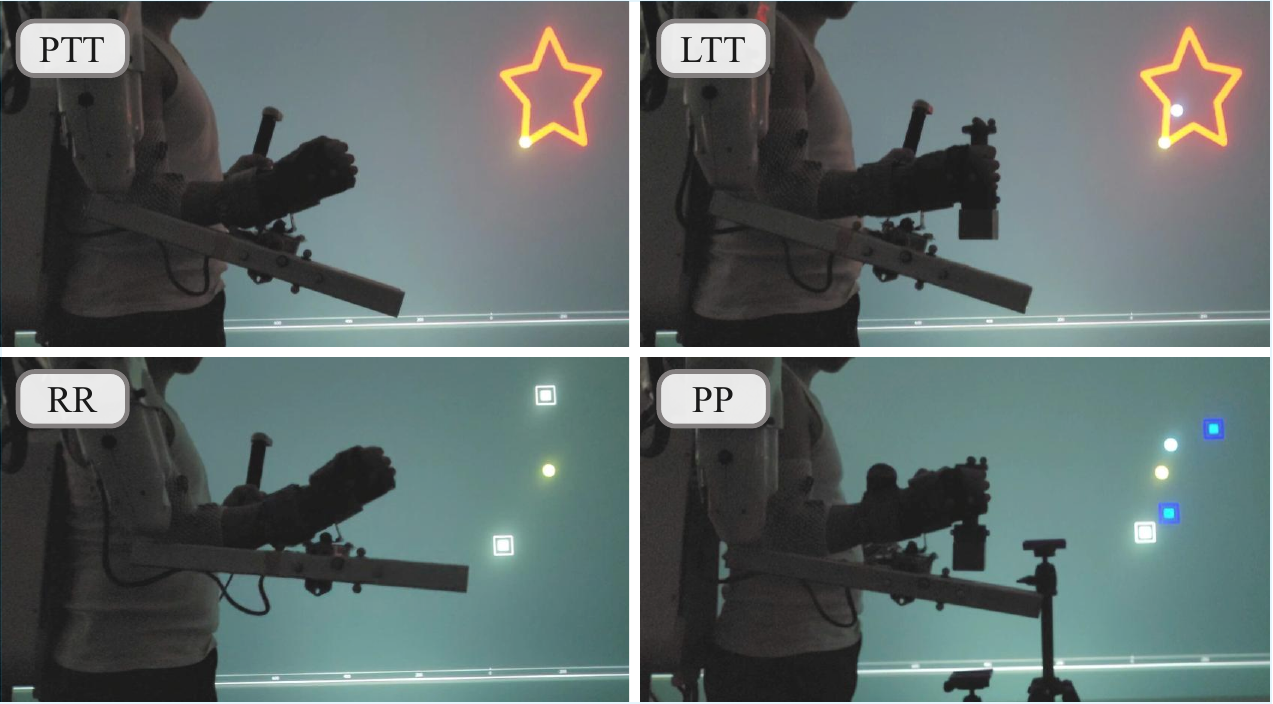}
\caption{Testing tasks: PTT = Pure trajectory tracking, LTT = Load trajectory tracking, RR = Reach and return, PP = Pick and place.}
\label{fig: testing tasks}
\end{figure}

In the second session, the participant performed four testing tasks (see Fig. \ref{fig: testing tasks}) under three control conditions: the proposed dead-zone-based assistance (DZA) with $\tau_r$ calculated by (\ref{eq: dz}), the direct assistance (DA) that directly use $\tau_r = \hat{\tau}_h$ (or with $T_{dz} = 0$) as in \cite{hoang2025emg}, and the transparent (TR) mode in which $\tau_r = 0$. The TR mode is used as the baseline for evaluating human controllability and movement smoothness, since it allows the human to remain fully in charge of task execution. We consider four testing tasks:
\begin{itemize}
    \item Pure trajectory tracking (PTT): The participant tracks a star trajectory, repeated 3 times.
    \item Load trajectory tracking (LTT): Similar to PTT, but the participant holds a load of 1.5 kg during the task.
    \item Reach and return (RR): The participant moves to a point at a high position, then returns to a point at a low position, repeated 5 times.
    \item Pick and place (PP): The participant picks a 1.5 kg load from the higher platform, moves it to the lower platform, then picks the load again and returns it to the higher platform, repeated 5 times.
\end{itemize}
The testing tasks and control conditions were organized in a random order.

\subsubsection{Data collection and processing}
\label{subsec: data collection}
We employ the methodology of \cite{Quesada2024} for the processing of EMG signals, the recording of human motion, and the calculation of the human torque. The EMG signals are collected using MiniWave sensors (Cometa, Bareggio MI, Italy) on eight muscles: the brachioradialis, brachialis, the long, medial, and lateral heads of the triceps, and the anterior, posterior, and medial deltoids. Raw EMG data were filtered by a fourth-order Butterworth bandpass filter (20–450~Hz), then centered, rectified, processed to extract the envelope using a 3~Hz low-pass filter, and finally normalized by the maximum voluntary contraction (MVC) of the corresponding muscle. Human motion is recorded using a ten-camera motion capture system. The human–robot interaction force is measured using a force/torque sensor mounted at the robot’s end effector. The motion and interaction data are then sent to the OpenSim Inverse Dynamics tool to calculate the human joint torque labels $\tau_h$ exerted during the task.

\subsubsection{Assistive controller design} The assistive controller was designed to support both shoulder and elbow flexion/extension. The two-output 4HNN model was employed as the high-level controller to estimate the shoulder and elbow torques. Each hidden layer of the 4HNN model contains 32 neurons with the $tanh(\cdot)$ activation function. The 4HNN model was selected for the experiments because it provides a tighter upper bound on the generalization error than the MVLR and NLMap models for both shoulder and elbow torque estimation, as shown in Fig. \ref{fig: guarantee}A. Although the 4HNN yields a less favorable $L_{\mathcal{B}^*}$ than the deeper LSTM and CNN-LSTM models, it requires substantially less training time, taking less than 5 min compared with more than 45 min for the deep models. This trade-off between generalization performance and computational cost makes the 4HNN a more practical choice for the experimental setup, particularly by reducing the waiting time required for participant-specific model training. 

For each subject, we trained the 4HNN model on the corresponding dataset collected from the random trajectory tracking training task. After obtaining the estimation model, a dead-zone with threshold $T_{dz} = 2L_{\mathcal{B}^*}$ was applied at the middle-level controller. Across 5 testing subjects, average dead-zone thresholds of $8.12$ N$\cdot$m and $5.76$ N$\cdot$m were used for shoulder and elbow torque, respectively. At the low-level controller, we set $K_P=0.3$ and $K_I=1$, as used in \cite{Quesada2025less}.

\subsubsection{Evaluation of assistance performance} We compare the proposed DZA control strategy with the DA controller of~\cite{hoang2025emg} and the TR mode across the four testing tasks in terms of movement smoothness and human physical effort. On the one hand, movement smoothness is quantified as the integral of the squared jerk (ISJ) over the movement duration. On the other hand, human physical effort is evaluated as the average activation of the eight muscles during task execution. To eliminate task-dependent bias, the activation of each muscle was first normalized by its task-specific maximum voluntary contraction (task MVC), defined as the maximum activation recorded during the task under the TR condition, before averaging across muscles.

\begin{figure*}[htp]

\centering
\makebox[0.48\textwidth][l]{\fontsize{10pt}{10pt}\selectfont \textbf{A}}
\makebox[0.48\textwidth][l]{\fontsize{10pt}{10pt}\selectfont \textbf{B}}
\includegraphics[scale=0.42]{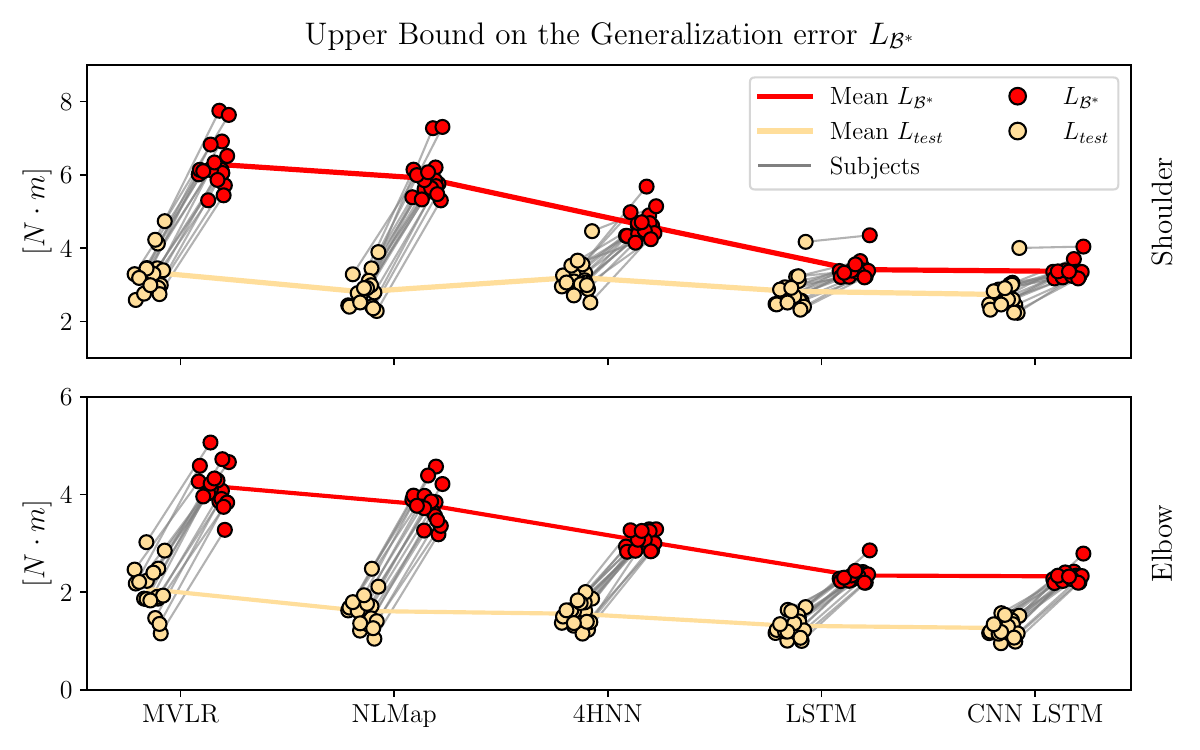}\hspace{0.5cm}\includegraphics[scale=0.42]{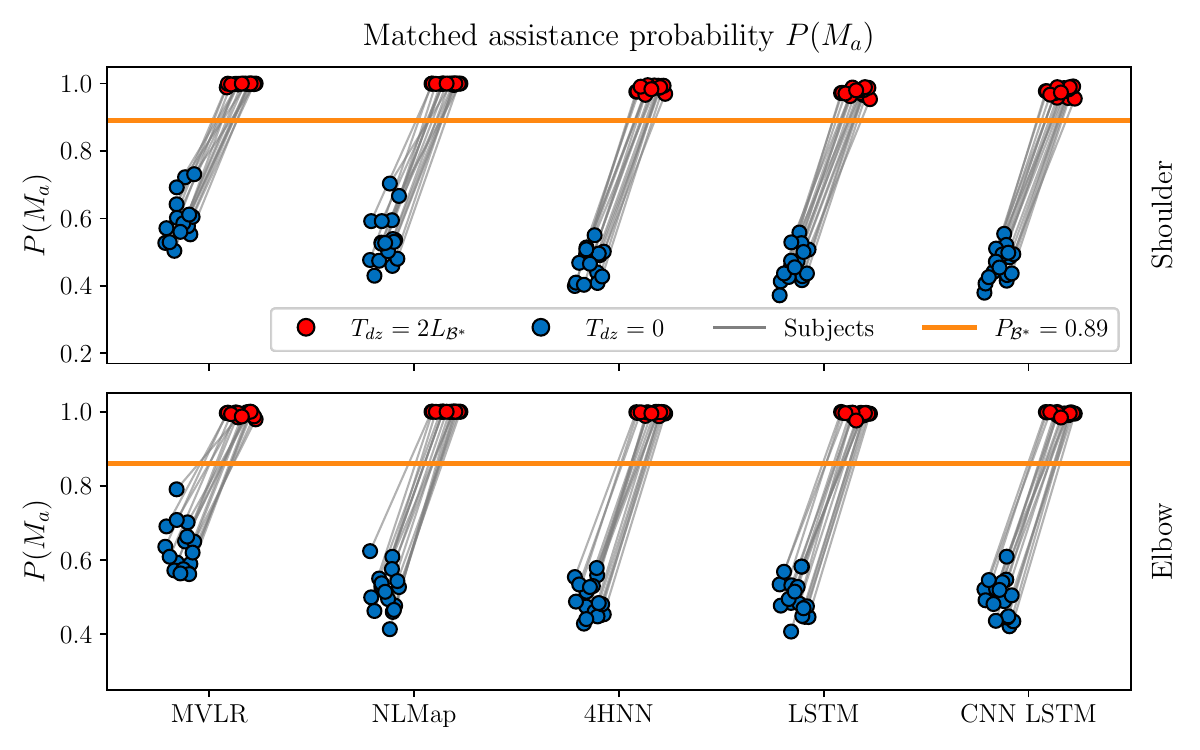}

\caption{Results on generalization performance guarantee across 17 subjects for the shoulder and elbow torques. The gray lines denote evaluated values from the same subject. \textbf{(A)} Torque estimation error of considered estimation models with corresponding generalization error upper bounds $L_{\mathcal{B}^*}$. \textbf{(B)} Matched assistance probability $P(M_a)$ of considered estimation models without the dead-zone $(T_{dz} = 0)$, with the dead-zone $(T_{dz} = 2L_{\mathcal{B}^*})$; the horizontal lines indicate the lower guarantee $P_{\mathcal{B}^*} = 0.89$ given by \eqref{eq: Pb*}.}
\label{fig: guarantee}
\end{figure*} 

\section{Results}
\label{sec: 4}

\subsection{Results on General Performance Guarantee}

\subsubsection{Guarantee on generalization error}

Fig. \ref{fig: guarantee}A reports the test errors of the shoulder and elbow torques for the five considered models, along with the corresponding upper bounds on the generalization error $L_{\mathcal{B}^*}$ computed by~(\ref{eq: LB}). It can be observed that, for all considered models, the test error of both shoulder and elbow torque was consistently below its corresponding generalization error upper bound, in agreement with Proposition~\ref{theo: Rademacher}. 

\subsubsection{Guarantee on matched assistance probability} 
Fig.~\ref{fig: guarantee}B compares the matched assistance probability $P(M_a)$ obtained from the outputs of the five considered EMG-to-torque models before and after dead-zone modulation with $T_{dz} = 2L_{\mathcal{B}^*}$. Note that $L_{\mathcal{B}^*}$ is calculated separately for each model, each subject, and each joint, as in Fig. \ref{fig: guarantee}A. Across all five model architectures and 17 subjects for both shoulder and elbow torque, dead-zone modulation substantially improved the matched assistance probabilities compared with the no-dead-zone condition, with average improvements of 0.38, 0.47, 0.51, 0.50, and 0.51 for the MVLR, NLMap, 4HNN, LSTM, and CNN-LSTM models, respectively. 

\begin{figure}[htp]
\includegraphics[scale=0.42]{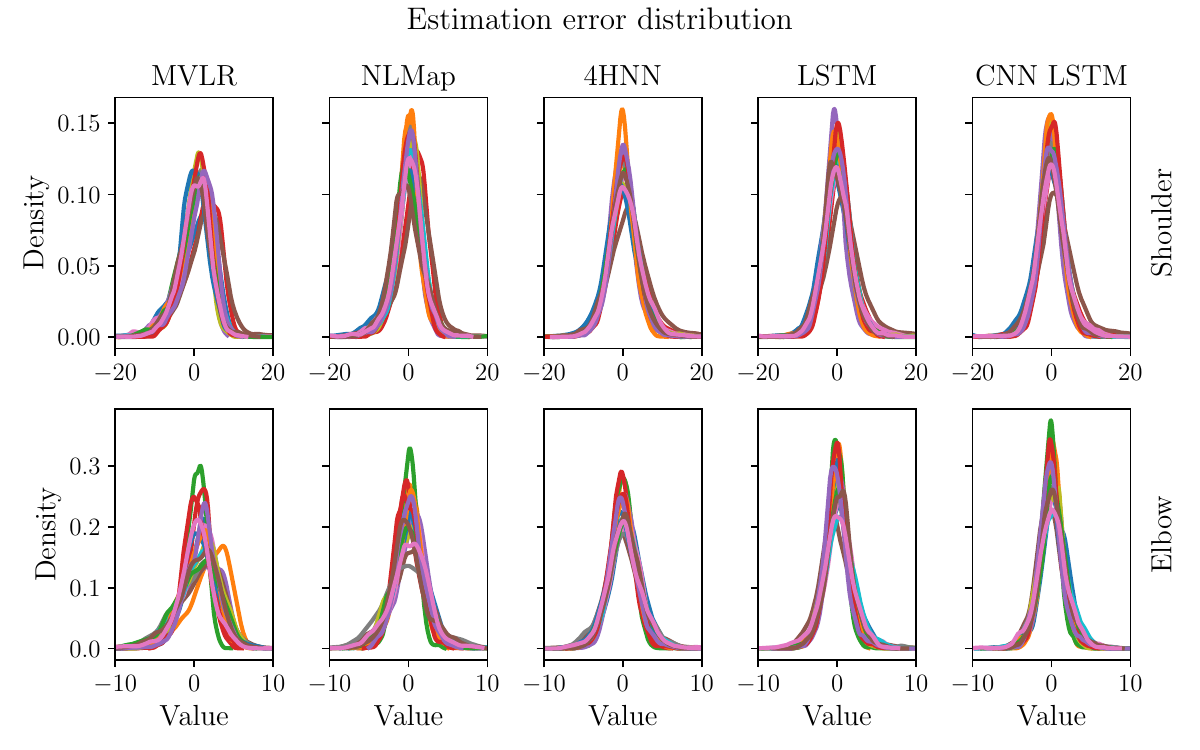}
\caption{The distribution of estimation error ($\tau_h - \hat{\tau}_h$)  obtained from 5 considered models across 17 subjects. Each curve corresponds to one subject.}
\label{fig: Error distribution}
\end{figure}

To assess the guarantee on the matched assistance probability, we first examine the validity of the zero-mean normality assumption for the torque estimation error $(\tau_h - \hat{\tau}_h)$. In Fig.~\ref{fig: Error distribution}, we illustrate the distributions of torque estimation errors obtained from the test sets of all 17 subjects for shoulder and elbow torque across the considered estimation models. The resulting error distributions exhibit a closely Gaussian shape with zero mean, notably for the 4HNN, LSTM, and CNN-LSTM models. This provides empirical evidence that supports the assumption in~\eqref{eq: assum} and, consequently, the practical relevance of the proposed theoretical analysis. After dead-zone modulation with the threshold $T_{dz} = 2L_{\mathcal{B}^*}$, it can be observed from Fig.~\ref{fig: guarantee}B that $P(M_a)$ remains above both theoretical lower bounds $P_{\mathcal{B}^*} = 0.89$ for all considered models and subjects, in accordance with Theorem \ref{theo: dz} and Remarks \ref{re: set dz}. Moreover, although the proposed design theoretically guarantees $P(M_a)\geq0.89$, the experimental results demonstrate a substantially higher matched assistance probability. Specifically, across the 17 subjects, two joint torques, and all considered models, the minimum observed $P(M_a)$ was $0.97$, indicating that the theoretical bound is conservatively satisfied in practice.

\subsubsection{Smoothness versus assistance}  
For each torque estimation model, the dead-zone threshold was individually determined as $T_{dz}=2L_{\mathcal{B}^*}$, using the corresponding error bound $L_{\mathcal{B}^*}$. This model-specific thresholding allows the five considered models to achieve a comparable matched assistance probability, as illustrated in Fig. \ref{fig: guarantee}B. Accordingly, the PCI given by \eqref{eq: PCI} is used as a secondary metric to assess the magnitude of the robot's positive contribution. 
\begin{figure}[ht!]
\includegraphics[scale=0.42]{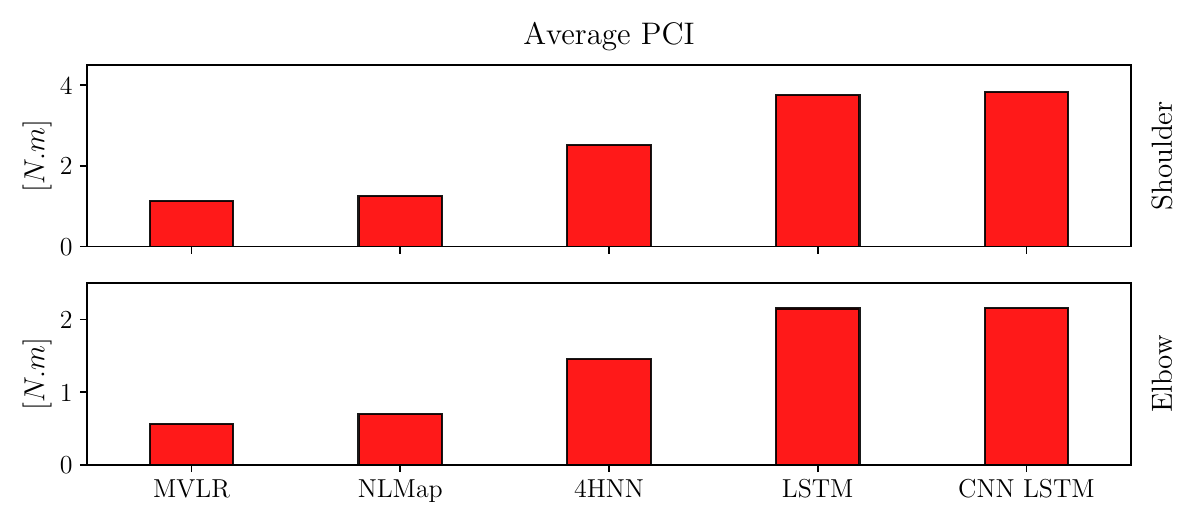}
\caption{Average PCI across 17 subjects and two joint torques derived by five considered torque estimation models: MVLR, NLMap, 4HNN, LSTM, and CNN-LSTM.}
\label{fig: PCI compare}
\end{figure}

Fig. \ref{fig: PCI compare} presents the average PCI obtained on the test set of 17 subjects under the five considered estimation models. With the dead-zone modulation, all models show a positive contribution. The level of positive contribution increases from MVLR and NLMap to the 4HNN, LSTM, and CNN-LSTM models. This indicates that higher-accuracy estimation models require a lower dead-zone threshold to guarantee matched assistance, thereby achieving a higher positive contribution.

\subsection{Results on EMG Assistance}
\label{sec: results assistance}

We assessed assistance performance against the baseline transparent condition TR, in which the robot typically follows the user without assisting. Consequently, the user retains full control of the motion, and natural trajectories are expected under this condition. However, the absence of robotic assistance in TR mode requires the user to generate their own effort to complete the task. Accordingly, this condition results in relatively high muscle activation \cite{Quesada2025less, hoang2025emg}. Therefore, an effective assistive controller should provide a level of movement smoothness comparable to that of the TR condition while reducing the user's muscular effort.

\begin{figure*}[htp]

\centering
\makebox[0.48\textwidth][l]{\fontsize{10pt}{10pt}\selectfont \textbf{A}}
\makebox[0.48\textwidth][l]{\fontsize{10pt}{10pt}\selectfont \textbf{B}}
\includegraphics[scale=0.42]{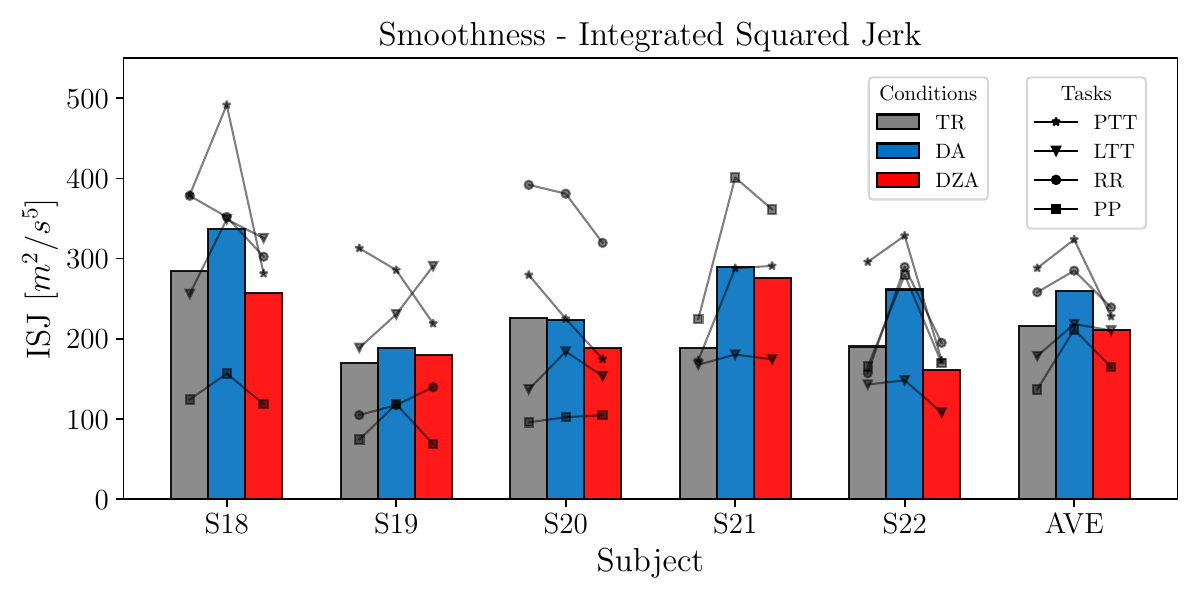}\hspace{0.5cm}\includegraphics[scale=0.42]{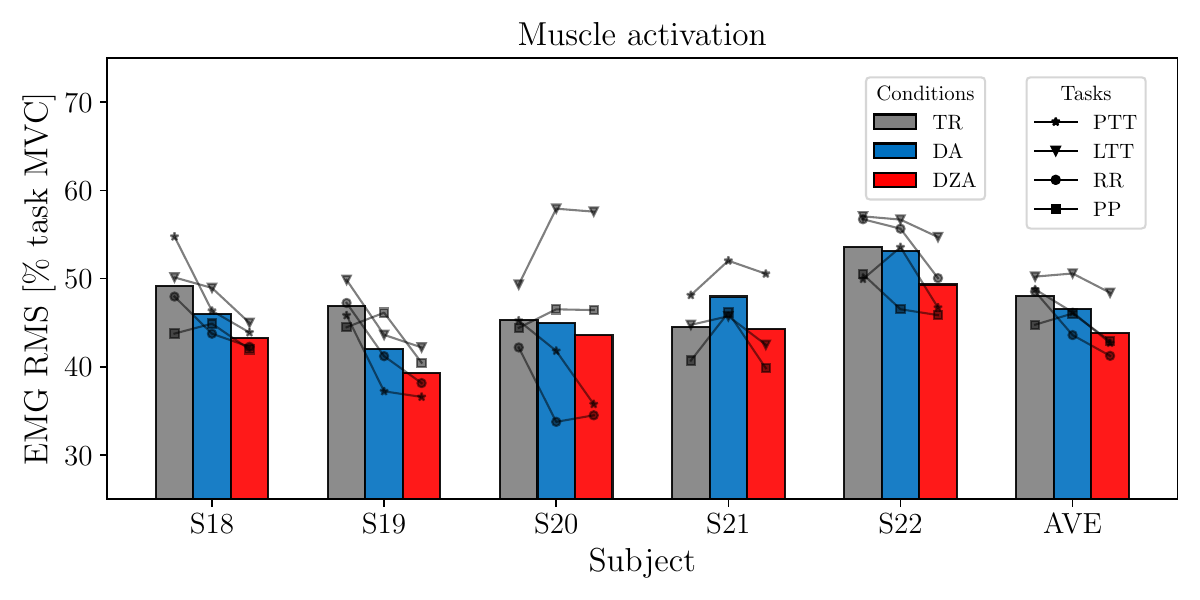}

\caption{Results on \textbf{(A)} motion smoothness and \textbf{(B)} muscle activation of 5 subjects across all testing tasks: PTT = Pure trajectory tracking, LTT = Load trajectory tracking, RR = Reach and return, PP = Pick and place. The last column, AVE, shows the average of the 5 subjects.}
\label{fig: assistance}
\end{figure*}

Fig.~\ref{fig: assistance}A illustrates the movement smoothness obtained by the three considered control conditions across all testing tasks. Since smoothness was quantified using the integrated squared jerk ISJ, lower values indicate smoother trajectories. On average, the proposed DZA achieves a smoothness level comparable to that of the TR mode, suggesting that the proposed strategy preserves human control of the task. In contrast, the DA controller yields the highest ISJ in four of the five testing subjects (i.e., all except S20) and the highest average value across subjects, indicating less smooth movements compared with both the TR mode and the proposed DZA. 

In terms of muscle activation, Fig.~\ref{fig: assistance}B shows that, averaged across all testing tasks, the two assistive conditions DA and DZA resulted in lower muscle activation than TR for four of the five testing subjects (i.e., all except S21). Compared with DA, the proposed DZA consistently achieved lower muscle activation across all five subjects. These results suggest that, although DA can generally reduce muscle activation relative to TR, its potential assistance mismatch may cause the robot to provide excessive or reversed torque relative to the user's intent, leading the user to generate counteracting torque to maintain movement stability, thereby diminishing the amount of muscle activation reduction. On the other hand, by limiting the effect of torque estimation errors, the proposed DZA aims to reduce this mismatch while maintaining effective assistance. Overall, when averaged across subjects and tasks, the DZA reduced muscle activation by $8.9\%$ relative to the TR mode, and $6.0\%$ relative to the DA condition. 

\subsection{Discussion}

This study characterizes the general control performance of the exoskeleton by a formal definition of matched assistance probability. Using a dead-zone mechanism, we provide a theoretical approach to set the dead-zone threshold that guarantees a lower bound on the matched assistance probability, which is verified in Fig. \ref{fig: guarantee}B. This improves the reliability of our method in assistive exoskeleton control by ensuring system performance even for previously unexamined tasks.

From a control perspective, the proposed strategy prioritizes human controllability and movement smoothness over robot assistance. In other words, the objective is not to maximize the amount of assistance, but to provide assistance that remains consistent with the user's intended movement and preserves natural task execution. For this purpose, in low-torque scenarios, the robot behaves similarly to TR mode, allowing the human to maintain movement smoothness. In contrast, reliable assistance is provided in high-torque scenarios to reduce human effort. This behavior was reflected in the experimental results, where the proposed EMG-assisted controller achieved smoother trajectories than DA while deriving lower muscle activation than both TR and DA. 


The design of our dead-zone threshold is related to the general performance of the EMG-to-torque model, as shown in Theorem \ref{theo: dz}. A small generalization error upper bound $L_{\mathcal{B}^*}$ enables achieving a high matched assistance probability with a small dead-zone threshold $T_{dz}$, allowing the robot to contribute more in assisting the task (see Fig. \ref{fig: guarantee} and \ref{fig: PCI compare}). These results demonstrate that the proposed framework can provide matched assistance even when the high-level EMG-to-torque estimation model varies in prediction quality. Nevertheless, the overall assistance performance may benefit from more accurate torque estimation. For the 4HNN model, Fig. \ref{fig: PCI compare} shows an average positive contribution of 2.5 N$\cdot$m and 1.5 N$\cdot$m for the shoulder and elbow torques, respectively. This level of robotic contribution was associated with an $8.9\%$ reduction in muscle activation compared with the TR mode. The LSTM and CNN-LSTM models yielded higher average positive contributions of 3.8 N$\cdot$m and 2.1 N$\cdot$m for the shoulder and elbow, respectively, suggesting the potential for a greater reduction in muscle activation than that achieved with 4HNN. While deep learning-based torque estimation models were not included in our practical robotic experiments due to their time-consuming training process, evaluating the proposed framework with such models in real-world human–robot experiments is still an important direction for future work.

In comparison with the scaled assistance (SCA) of \cite{Molinaro2024TaskAgnostic}, an offline experiment was conducted using the same 17-subject dataset \cite{quesada_dataset}. In Fig. \ref{fig: SCA vs DZA}, we compare SCA and the proposed DZA across 17 subjects in terms of the matched assistance probability $P(M_a)$ and the average PCI. For both shoulder and elbow torque, DZA achieves higher $P(M_a)$ and average PCI than SCA, suggesting potential advantages could be achieved with DZA over SCA. Future work will further establish a physical human–robot experiment between DZA and SCA to validate this statement. 
\begin{figure}[htp]
\includegraphics[scale=0.42]{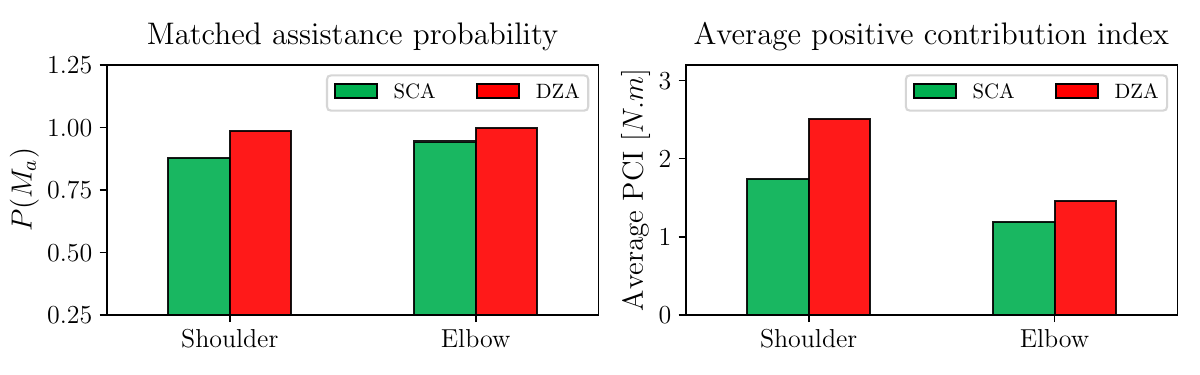}
\caption{Comparison between SCA and DZA across 17 subjects in terms of $P(M_a)$ and average PCI.}
\label{fig: SCA vs DZA}
\end{figure}

\section{Conclusion}
\label{sec: 5}
In this paper, we presented a theoretical framework for designing the robot’s desired interaction torque that guarantees a high probability of matched assistance. By integrating a dead-zone mechanism with a bounded-generalization-error EMG-to-torque model, our approach provides provable reliability in exoskeleton assistance with a formal guarantee on control performance. Unlike previous studies, our dead-zone threshold is derived theoretically from a general performance evaluation based solely on the training task, without conducting additional testing tasks on the robot. Practical experiments on the ABLE exoskeleton across multiple testing tasks verified the general effectiveness of the proposed approach, demonstrating guaranteed movement smoothness while reducing human physical effort. Future work will investigate the integration of the proposed method with deep learning-based torque estimation models (LSTM \cite{zhang2022lower} and CNN-LSTM \cite{hoang2025from}) in physical human–robot experiments, as well as compare the proposed method with other control strategies, such as the SCA \cite{Molinaro2024TaskAgnostic}, to further validate its effectiveness.


\section*{Appendix A: Proof of Proposition \ref{theo: Rademacher}}
\label{appendix 1}

At step $k^*$ of GD, we define the hypothesis class $\mathcal{F}_{k^*}$ containing $f_h(\cdot)$:
\begin{equation}
    {\mathcal{F}_{k^*}}=\left\{\boldsymbol{h}\mapsto \boldsymbol{b}^\top\boldsymbol{h}|\boldsymbol{b}\in \mathbb{R}^m,\|\boldsymbol{b}\|\leq\|\boldsymbol{a}(k^*)\|\right\}
\end{equation}
From Theorem 5.5 of \cite{ma22}, we have the Rademacher complexity ${\mathcal{R}}_{S}({\mathcal{F}_{k^*}})$ of $\mathcal{F}_{k^*}$ satisfies:
\begin{equation}
\label{eq: Rademacher}
    {\mathcal{R}}_{S}({\mathcal{F}_{k^*}}) \leq \frac{\|\boldsymbol{a}(k^*)\|}{\sqrt{n}}\mathcal{C}.
\end{equation}
Subsequently, according to Proposition 1 of \cite{XavierCF25} for the case $p=1$, with a probability at least $1-\delta$ over the sample $\mathcal{S}$ of size $n$, we have:
\begin{equation}
\label{eq: bound}
    L_{\mathcal{D}}(f_h)\leq\frac{1}{n}\sum_{i=1}^n|v_{i}(k^*)|+ 2{\mathcal{R}}_{S}({\mathcal{F}_{k^*}}) + 3R\sqrt{\frac{\log \frac{2}{\delta}}{2n}}.
\end{equation}
Substituting (\ref{eq: Rademacher}) into (\ref{eq: bound}), we obtain \eqref{eq: lb*} and this completes the proof.

\section*{Appendix B: Extended Version of Theorem \ref{theo: dz}}
\begin{theorem}\label{theo: dz 2} If the estimation error follows a normal distribution $\mathcal{N}(\mu,\sigma^2)$ and generalization error $L_{\mathcal{D}} = \mathbb{E}\left(|{\tau}_h - \hat{\tau}_h|\right)$ satisfies $L_{\mathcal{D}}\leq L_{\mathcal{B}^*}$, then, by using the dead-zone mechanism with the threshold  $T_{dz} \geq L_{\mathcal{B}^*}$, the matched assistance probability $P(M_a)$ is guaranteed to satisfy:
\begin{equation}
\label{eq: worst}
P\left(M_a\right) 
\geq  \Phi\left(\sqrt{\frac{2}{\pi}}\frac{T_{dz} - L_{\mathcal{B}^*}}{L_{\mathcal{B}^*}}\right) + \Phi\left(\sqrt{\frac{2}{\pi}}\frac{T_{dz} + L_{\mathcal{B}^*}}{L_{\mathcal{B}^*}}\right) - 1
\end{equation}
\end{theorem}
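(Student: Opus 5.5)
The plan is to reduce the matched-assistance probability to a two-sided Gaussian probability, as in the proof of Theorem \ref{theo: dz}. We then bound the two unknown parameters $\mu$ and $\sigma$ separately in terms of $L_{\mathcal{B}^*}$, and finally show that the resulting expression is monotone in each parameter in the right direction.

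\textbf{Step 1: reduce to a Gaussian expression.} Write $e=\tau_h-\hat{\tau}_h\sim\mathcal{N}(\mu,\sigma^2)$. Proposition \ref{prop: relation PMa} gives $P(M_a)\geq P(|e|\leq T_{dz})$, and standardizing $e$ yields
\begin{equation*}
P(|e|\leq T_{dz})=g(\mu,\sigma):=\Phi\left(\frac{T_{dz}-\mu}{\sigma}\right)+\Phi\left(\frac{T_{dz}+\mu}{\sigma}\right)-1 .
\end{equation*}

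\textbf{Step 2: bound the two parameters.}
\begin{itemize}
\item \emph{Mean.} By Jensen's inequality, $|\mu|=|\mathbb{E}(e)|\leq\mathbb{E}|e|=L_{\mathcal{D}}\leq L_{\mathcal{B}^*}$.
\item \emph{Standard deviation.} Write $e=\sigma(Z+\mu/\sigma)$ with $Z$ standard normal. The map $c\mapsto\mathbb{E}|Z+c|$ is convex and even, so it is minimized at $c=0$, where it equals $\sqrt{2/\pi}$. Hence $\sqrt{2/\pi}\,\sigma\leq\mathbb{E}|e|\leq L_{\mathcal{B}^*}$, that is, $\sigma\leq\sigma_{\max}:=\sqrt{\pi/2}\,L_{\mathcal{B}^*}$.
\end{itemize}
This generalizes \eqref{eq: sigma}: the zero-mean case is simply the case where the bound is attained.

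\textbf{Step 3: monotonicity of $g$.} This is the main obstacle, because $g$ must be shown to decrease as both $|\mu|$ and $\sigma$ grow, within the admissible region.
\begin{itemize}
\item \emph{In $\mu$.} The function $g$ is even in $\mu$. For $\mu\geq0$,
\begin{equation*}
\partial_\mu g=\frac{1}{\sigma}\left[\phi\left(\frac{T_{dz}+\mu}{\sigma}\right)-\phi\left(\frac{T_{dz}-\mu}{\sigma}\right)\right]\leq0,
\end{equation*}
since $|T_{dz}+\mu|\geq|T_{dz}-\mu|$ when $T_{dz},\mu\geq0$. So $g$ is nonincreasing in $|\mu|$.
\item \emph{In $\sigma$.} Write $g=P\bigl(Z\in[(-T_{dz}-\mu)/\sigma,\,(T_{dz}-\mu)/\sigma]\bigr)$. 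Whenever $|\mu|\leq T_{dz}$, this interval contains $0$: its left endpoint is $\leq0$ and its right endpoint is $\geq0$. Increasing $\sigma$ then shrinks the interval toward $0$, so $g$ is nonincreasing in $\sigma$. Equivalently, $\partial_\sigma g=-\sigma^{-2}\bigl[(T_{dz}-\mu)\phi(\cdot)+(T_{dz}+\mu)\phi(\cdot)\bigr]\leq0$.
\item \emph{Role of the hypothesis.} We need $|\mu|\leq T_{dz}$ for the $\sigma$-monotonicity, and we need it at the worst-case mean $|\mu|=L_{\mathcal{B}^*}$ as well. This is exactly why the statement assumes $T_{dz}\geq L_{\mathcal{B}^*}$: combined with Step 2 it guarantees $|\mu|\leq L_{\mathcal{B}^*}\leq T_{dz}$ throughout.
\end{itemize}

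\textbf{Step 4: chain the bounds.} First increase $\sigma$ to $\sigma_{\max}$ with $\mu$ fixed, then increase $|\mu|$ to $L_{\mathcal{B}^*}$ with $\sigma_{\max}$ fixed:
\begin{equation*}
P(M_a)\geq g(\mu,\sigma)\geq g(\mu,\sigma_{\max})\geq g(L_{\mathcal{B}^*},\sigma_{\max}).
\end{equation*}
Substituting $\sigma_{\max}=\sqrt{\pi/2}\,L_{\mathcal{B}^*}$ gives the arguments $\sqrt{2/\pi}\,(T_{dz}\mp L_{\mathcal{B}^*})/L_{\mathcal{B}^*}$, which is exactly \eqref{eq: worst}.

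The bound is conservative: it treats $\mu$ and $\sigma$ as independently worst-case, whereas in fact $|\mu|$ and $\sigma$ cannot both attain their maxima under $\mathbb{E}|e|\leq L_{\mathcal{B}^*}$. This does not affect validity, only tightness.
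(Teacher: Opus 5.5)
Your proof is correct and follows the paper's Appendix B argument: the same reduction through Proposition~\ref{prop: relation PMa}, the same Jensen bound $|\mu|\leq L_{\mathcal{B}^*}$, the same bound $\sigma\leq\sqrt{\pi/2}\,L_{\mathcal{B}^*}$, and the same sign analysis of the partial derivatives of $\mathcal{G}(u,v)$ in $u$ and $v$. The one local difference is how you get the $\sigma$ bound: you use convexity and evenness of $c\mapsto\mathbb{E}|Z+c|$ instead of the paper's explicit folded-normal mean $\sigma g(|\mu|/\sigma)$ with $g'(s)=2\Phi(s)-1\geq 0$, which avoids that computation; you also rightly note that $T_{dz}\geq L_{\mathcal{B}^*}$ is needed only for the monotonicity in $\sigma$, whereas the paper invokes it for both derivatives.
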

\begin{proof}

From Proposition \ref{prop: relation PMa} and $\left( {\tau}_h - \hat{\tau}_h\right)\sim\mathcal{N}(\mu,\sigma^2)$:
\begin{equation}
\label{eq: PMa by g}
\begin{aligned}
    P(M_a) =&\;P\left(\tau_h - \hat{\tau}_h \leq T_{dz}\right) - P\left(\tau_h - \hat{\tau}_h \geq -T_{dz}\right)\\
    =&\;\Phi\left(\frac{T_{dz} - |\mu|}{\sigma}\right) + \Phi\left(\frac{T_{dz} + |\mu|}{\sigma}\right) - 1.
\end{aligned}
\end{equation}

We next introduce the auxiliary function
\begin{equation}
\label{eq: g def}
\mathcal{G}(u,v)=
\Phi\!\left(\frac{T_{dz}-u}{v}\right)
+\Phi\!\left(\frac{T_{dz}+u}{v}\right)-1,
\end{equation}
where \(u\in[0,\beta]\) and \(v\in(0,\gamma]\) and prove that for $T_{dz}\geq \beta$: 
\begin{equation}
\label{eq: bound g uv}
    \mathcal{G}(u,v) \geq \mathcal{G}(\beta,\gamma) \; \forall\;(u,v)\in[0,\beta]\times(0,\gamma].
\end{equation} 
For any fixed $v \in (0,\gamma]$, taking the derivative of $\mathcal{G}(u,v)$ with respect to $u$ yields:
\begin{equation}
    \frac{\partial \mathcal{G}}{\partial u} = \frac{1}{v}\left[\phi\left(\frac{T_{dz} + u}{v}\right) - \phi\left(\frac{T_{dz} - u}{v}\right)\right]
\end{equation}
where $\phi(\cdot)$ is the probability density function (PDF) of the standard normal distribution. 

Since $T_{dz} \geq \beta$, we have $T_{dz} + u \geq T_{dz} - u \geq 0$ for all $u \in [0,\beta]$; therefore $ \phi\left(\frac{T_{dz} + u}{v}\right) \leq \phi\left(\frac{T_{dz} - u}{v}\right)$ and: 
\begin{equation}
\label{eq: g' u}
    \frac{\partial \mathcal{G}}{\partial u}\leq 0 \; \forall\;(u,v)\in[0,\beta]\times(0,\gamma].
\end{equation}
For any fixed $u \in [0,\beta]$, taking the derivative of $\mathcal{G}(u,v)$ with respect to $v$ yields:
$$\frac{\partial \mathcal{G}}{\partial v} =- \frac{T_{dz}+u}{v^2} \phi\left(\frac{T_{dz} + u}{v}\right)- \frac{T_{dz} - u}{v^2}\phi\left(\frac{T_{dz} - u}{v}\right).$$
Since $T_{dz} + u \geq T_{dz} - u \geq 0$ and $\phi(\cdot)\geq 0$, we have:
\begin{equation}
\label{eq: g' v}
    \frac{\partial \mathcal{G}}{\partial v}\leq 0 \; \forall\;(u,v)\in[0,\beta]\times(0,\gamma].
\end{equation}
From (\ref{eq: g' u}) and (\ref{eq: g' v}), we derive \eqref{eq: bound g uv}.

We now prove that $|\mu|\leq L_{\mathcal{B}^*}\;\text{and}\; \sigma \leq \sqrt{{\pi}/{2}}L_{\mathcal{B}^*}$. From the normal distribution $\mathcal{N}\left(\mu,\sigma^2\right)$, we have:
\begin{equation}
\label{eq: mu proof}
    |\mu| = |\mathbb{E}\left({\tau}_h - \hat{\tau}_h\right)|\leq\mathbb{E}\left(|{\tau}_h - \hat{\tau}_h|\right)\leq L_{\mathcal{B}^*}
\end{equation}
We also have:
\begin{equation}
\label{eq: E def}
    \mathbb{E}\left(|{\tau}_h - \hat{\tau}_h|\right) = 2\sigma\phi\left(\frac{|\mu|}{\sigma}\right) + |\mu|\left(2\Phi\left(\frac{|\mu|}{\sigma}\right) - 1\right).
\end{equation}
Let $g(s)$ a function of non-negative variable $s$ defined by:
\begin{equation}
    g(s)=2\phi\left(s\right) + s\left(2\Phi\left(s\right) - 1\right).
\end{equation}
By differentiating $g(s)$ with respect to $s$, we obtain:
\begin{equation}
\label{eq: gs'}
    g'(s) = 2\phi'\left(s\right) + 2\Phi\left(s\right) - 1 + 2s\Phi'\left(s\right).
\end{equation}
Using the property of the PDF function, $\phi'\left(s\right) = -s\phi(s)$, and $\Phi'\left(s\right) = \phi\left(s\right)$, we derive $g'(s) =2\Phi\left(s\right) - 1 \geq 0$ for all $s\geq 0$. Accordingly, $g(s) \geq g(0)\;\forall\;s\geq 0$. Combine with (\ref{eq: E def}) yields: 
\begin{equation}
    \mathbb{E}\left(|{\tau}_h - \hat{\tau}_h|\right) = \sigma g\left(\frac{|\mu|}{\sigma}\right)\geq\sigma g(0) = \sigma \sqrt{\frac{2}{\pi}}.
\end{equation}
Thus, we obtain:
\begin{equation}
\label{eq: sigma proof}
    \sigma \leq \sqrt{\frac{\pi}{2}}\mathbb{E}\left(|{\tau}_h - \hat{\tau}_h|\right) \leq \sqrt{\frac{\pi}{2}}L_{\mathcal{B}^*}.
\end{equation}

Finally, from \eqref{eq: PMa by g}, \eqref{eq: bound g uv}, \eqref{eq: mu proof} and \eqref{eq: sigma proof}, replacing $\beta$  and $\gamma$ in \eqref{eq: bound g uv} by $L_{\mathcal{B}^*}$ and $\sqrt{\pi/2}L_{\mathcal{B}^*}$, respectively, we have:
\begin{equation*}
\begin{aligned}
    P(M_a) =&\;\mathcal{G}(|\mu|, \sigma)    \geq \mathcal{G}(L_{\mathcal{B}^*}, \sqrt{\pi/2}L_{\mathcal{B}^*})\\
    =&\;\Phi\left(\sqrt{\frac{2}{\pi}}\frac{T_{dz} - L_{\mathcal{B}^*}}{L_{\mathcal{B}^*}}\right) + \Phi\left(\sqrt{\frac{2}{\pi}}\frac{T_{dz} + L_{\mathcal{B}^*}}{L_{\mathcal{B}^*}}\right) - 1
\end{aligned}
\end{equation*}
i.e. \eqref{eq: worst}.
\end{proof}

\bibliographystyle{IEEEtran}
\bibliography{bibliobis}

\vfill

\end{document}